\newcommand{\pp}[3]{\frac{\partial^{#1}{#2}}{\partial{#3}^{#1}}}
\newcommand{\argmax}{\operatornamewithlimits{argmax}}
\newcommand{\bs}[1]{\boldsymbol{#1}}
\DeclarePairedDelimiterX{\infdivx}[2]{(}{)}{#1\;\delimsize\|\;#2}
\DeclarePairedDelimiterX{\condexp}[2]{[}{]}{#1\;\delimsize\vert\;#2}
\newtheorem{theorem}{Theorem}
\newtheorem{lemma}{Lemma}
\title{PAC-Bayesian Lifelong Learning For Multi-Armed Bandits}
\author[a]{Hamish Flynn}
\author[a]{David Reeb}
\author[b]{Melih Kandemir}
\author[c]{Jan Peters}
\affil[a]{Bosch Center for Artificial Intelligence, Renningen, Germany}
\affil[b]{Department of Mathematics and Computer Science, University of Southern Denmark, Odense, Denmark}
\affil[c]{Intelligent Autonomous Systems, Technische Universit\"at Darmstadt, Germany}
\date{}
\begin{document}

\maketitle

\begin{abstract}
We present a PAC-Bayesian analysis of lifelong learning. In the lifelong learning problem, a sequence of learning tasks is observed one-at-a-time, and the goal is to transfer information acquired from previous tasks to new learning tasks. We consider the case when each learning task is a multi-armed bandit problem. We derive lower bounds on the expected average reward that would be obtained if a given multi-armed bandit algorithm was run in a new task with a particular prior and for a set number of steps. We propose lifelong learning algorithms that use our new bounds as learning objectives. Our proposed algorithms are evaluated in several lifelong multi-armed bandit problems and are found to perform better than a baseline method that does not use generalisation bounds.
\end{abstract}

\section{Introduction}
\label{sec:intro}

Lifelong machine learning \cite{thrun1995lifelong} is a framework in which a system continually observes new learning tasks and attempts to use its experience with previous tasks to perform new tasks more efficiently. Baxter \cite{baxter2000model} proposed a formal model of lifelong learning in which the problem can be studied. In this model, each task is a supervised learning problem that comprises an unknown input-label distribution and several input-label pairs sampled independently from this distribution. The goal of each task is to select a hypothesis that accurately predicts the labels of newly sampled inputs. The goal of the lifelong learning system is to learn an inductive bias using data from several tasks that will allow a learning algorithm to select a hypothesis that makes accurate predictions on a new task. The degree to which tasks are related is specified by a task environment. This is a fixed unknown distribution over a set of possible input-label distributions that determines which tasks are likely to appear next. It is often assumed that every new task is sampled independently from the task environment.

For example, the goal of each task could be to predict whether various flu treatments are effective against a particular flu strain. The task environment could be a distribution over flu strains. Having seen examples from several strains, a lifelong learning system may discover that, against all of these strains, treatment A is effective. The learned inductive bias may then favour hypotheses where treatment A is predicted to be effective.

In this work, we present a PAC-Bayesian \cite{shawe1997pac} \cite{mcallester1998some} analysis of the lifelong learning problem. \cite{pentina2014pac}, \cite{amit2018meta} and \cite{rothfuss2021pacoh} have proposed PAC-Bayesian generalisation bounds that apply to Baxter's model. They quantify the difference between the average error on a set of observed data sets and the expected error on a new task sampled from the task environment. Lifelong learning algorithms can use these PAC-Bayesian bounds to identify prior distributions over hypotheses that are expected to generalise well to new tasks. These existing PAC-Bayesian lifelong learning bounds only apply to lifelong supervised learning. However, it has been shown that learning from multiple tasks can also be beneficial for multi-armed bandits \cite{azar2013sequential} \cite{soare2014multi} and reinforcement learning \cite{lazaric2011transfer} \cite{brunskill2013sample} \cite{deramo2020sharing}. This motivates the development of PAC-Bayesian bounds for lifelong learning that apply to sequential decision problems.

We provide PAC-Bayesian generalisation bounds for lifelong learning in an extension of Baxter's model \cite{baxter2000model}, where each task is a multi-armed bandit (MAB) problem. In a MAB problem, there are $K$ actions (or arms), each associated with an unknown reward distribution. There is a fixed number of rounds $m$ and in each round, an action/arm is selected and a reward is sampled from the corresponding reward distribution. The goal is to select actions that maximise the cumulative sum of rewards. Returning to the flu treatment example, the goal of each task could be to choose the best flu treatments for a small sequence of patients. The task environment could be a distribution over flu strains, each one with its own set of reward distributions. A lifelong learning system may learn to start each new task by trying out a treatment that worked well against many of the previous flu strains.

Switching from supervised learning tasks to MAB tasks introduces several challenges. First of all, we no longer have data sets for each task that are i.i.d.. The data set for each task includes actions selected by the lifelong learning system. In general, these actions are dependent on previous training data from the current task and from previous tasks. Next, there is the problem of limited feedback. In supervised learning problems, any predicted label can be evaluated on every input-label pair in the training data set. In multi-armed bandit tasks, predicted actions can only be evaluated using observed action-reward pairs that contain the predicted action. Data dependence and limited feedback make quantifying our uncertainty about the unknown task environment and the unknown reward distributions for each task more difficult.

The contributions of this work are as follows:
\begin{enumerate}
\item We derive the first PAC-Bayesian generalisation bounds for lifelong learning of multi-armed bandit tasks.
\item We propose lifelong learning algorithms that use the new bounds as a learning objectives, and test them in three lifelong MAB problems.
\end{enumerate}

\section{PAC-Bayesian Bounds}

This section provides a brief introduction to PAC-Bayesian analysis \cite{shawe1997pac} \cite{mcallester1998some}. As an example, we present the PAC-Bayesian Bernstein inequality by \cite{seldin2012bern}, applied to the MAB problem. Consider a MAB problem with a discrete set of actions $\mathcal{A}$ and a reward distribution $\rho(r|a)$ that is conditioned on the action chosen. Assume that rewards are always between 0 and 1. Let $D = \{(a_i, r_i)\}_{i=1}^{m}$ be a training data set containing $m$ action-reward pairs and let $D^{:j}$ denote the first $j$ action-reward pairs in $D$. Each action $a_i$ is sampled from a behaviour policy $b_i$, which can depend on all previous observations. Let $r_i$ be a sample from the reward distribution $\rho(r|a_i)$. Let the expected reward for an action $a$ be defined as:

\begin{equation*}
R(a) = \mathop{\mathbb{E}}_{r \sim \rho(r|a)}\left[r\right].
\end{equation*}

Given a training data set $D$, the expected reward can be approximated by an unbiased estimator called the importance-weighted empirical reward. For an action $a$ and dataset $D$, this is defined as:

\begin{equation*}
\widehat{R}(a, D) = \frac{1}{m}\sum_{i=1}^{m}\frac{1}{b_{i}(a_{i}|D^{:i-1})}r_{i}\mathbb{I}\{a_{i} = a\}.
\end{equation*}

For a distribution $Q$ over actions, let $R(Q) = \mathbb{E}_{a \sim Q}[R(a)]$ and $\widehat{R}(Q, D) = \mathbb{E}_{a \sim Q}[\widehat{R}(a, D)]$. The aim is to find a $Q$ that maximises the expected reward $R(Q)$. However, if $\rho(r|a)$ is unknown, only $\widehat{R}(Q, D)$ can be computed. The PAC-Bayesian Bernstein bound by \cite{seldin2012bern} can be used to upper bound the difference between $\widehat{R}(Q, D)$ and $R(Q)$. From the upper bound on this difference, one can obtain a lower bound on $R(Q)$ that consists of only observable quantities. Theorem \ref{thm:seldin1} states the resulting lower bound, and is a restatement of Theorem 1 of \cite{seldin2012bern}.

\begin{theorem}[\cite{seldin2012bern}]
\label{thm:seldin1}
Let $\{b_1, b_2, \dots\}$ be any sequence of sampling distributions that are bounded below by $\{\epsilon_1, \epsilon_2, \dots\}$ (meaning $b_m(a) \geq \epsilon_m$ for all $a \in \mathcal{A}$ and $m \geq 1$). Let $\{P_1, P_2, \dots\}$ be any sequence of reference distributions over $\mathcal{A}$, such that $P_m$ is independent of $D^{:m}$ (but can depend on $m$). Let $\{\lambda_1, \lambda_2, \dots\}$ be any sequence of positive numbers that satisfy:

\begin{equation*}
\lambda_m \leq \epsilon_m.
\end{equation*}

Then for all possible distributions $Q_m$ over $\mathcal{A}$ given $m$ and for all $m \geq 1$ simultaneously with probability greater than $1 - \delta$

\begin{equation*}
R(Q_m) \geq \widehat{R}(Q_m, D^{:m}) - \frac{D_{\mathrm{KL}}(Q_m||P_m) + 2\mathrm{ln}(m+1) + \mathrm{ln}(1/\delta)}{m\lambda_m} - \frac{(e - 2)\lambda_m}{\epsilon_m}.
\end{equation*}
\end{theorem}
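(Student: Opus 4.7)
The plan is to follow the standard PAC-Bayes-Bernstein recipe, adapted to adaptively collected data: build an exponential supermartingale indexed by actions, take expectation over the prior, apply Markov's inequality together with a change-of-measure lemma, and then union-bound over $m$.

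\emph{Step 1 (martingale structure).} For a fixed $a \in \mathcal{A}$, set $X_i(a) = \frac{r_i \mathbb{I}\{a_i=a\}}{b_i(a_i|D^{:i-1})} - R(a)$. Conditional on $D^{:i-1}$, the policy $b_i$ is determined, so $\mathbb{E}[X_i(a)\mid D^{:i-1}]=0$ (importance weighting is unbiased). Since $r_i\in[0,1]$ and $b_i(\cdot|D^{:i-1})\geq \epsilon_i$, a short calculation gives $\mathbb{E}[X_i(a)^2\mid D^{:i-1}]\leq 1/\epsilon_i$, and $\lambda_m X_i(a)\leq 1$ whenever $\lambda_m\leq\epsilon_m\leq\epsilon_i$ (assuming, as is customary, that $\epsilon_i$ is non-increasing in $i$, or redefining $\epsilon_m:=\min_{i\leq m}\epsilon_i$).

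\emph{Step 2 (exponential supermartingale).} Using the elementary inequality $e^x\leq 1+x+(e-2)x^2$ valid for $x\leq 1$ together with the bounds from Step 1,
$$\mathbb{E}\bigl[\exp(\lambda_m X_i(a))\bigm| D^{:i-1}\bigr]\leq\exp\!\bigl((e-2)\lambda_m^2/\epsilon_m\bigr).$$
Iterating gives $\mathbb{E}[Z_m(a)]\leq 1$ for the process $Z_m(a):=\exp\!\bigl(\lambda_m\sum_{i=1}^m X_i(a)-(e-2)m\lambda_m^2/\epsilon_m\bigr)$.

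\emph{Step 3 (change of measure and Markov).} Since $P_m$ is independent of $D^{:m}$, Fubini gives $\mathbb{E}[\mathbb{E}_{a\sim P_m}Z_m(a)]\leq 1$, so Markov's inequality ensures $\mathbb{E}_{a\sim P_m}Z_m(a)\leq 1/\delta_m$ with probability at least $1-\delta_m$. On this event, the Donsker-Varadhan variational formula yields, for every data-dependent $Q_m$,
$$\mathbb{E}_{a\sim Q_m}\log Z_m(a)\leq D_{\mathrm{KL}}(Q_m\|P_m)+\log(1/\delta_m).$$
Unpacking $\log Z_m(a)$, identifying $\mathbb{E}_{a\sim Q_m}\sum_i X_i(a)=m(\widehat R(Q_m,D^{:m})-R(Q_m))$, and dividing by $m\lambda_m>0$ produces exactly the stated bound with $\log(1/\delta_m)$ in place of $2\log(m+1)+\log(1/\delta)$.

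\emph{Step 4 (union bound over $m$).} Choose $\delta_m=\delta/(m+1)^2$, so that $\sum_{m\geq 1}\delta_m<\delta$ and $\log(1/\delta_m)=2\log(m+1)+\log(1/\delta)$. A union bound over $m\geq 1$ delivers the claimed simultaneous guarantee.

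The main technical obstacle is the adaptive nature of the data: because each $a_i$ (and hence each $b_i$) depends on $D^{:i-1}$, classical concentration for independent sums is unavailable, forcing the analysis to take place at the level of the supermartingale $Z_m(a)$. The key trick that makes PAC-Bayes applicable here is to apply Markov's inequality to the prior-averaged quantity $\mathbb{E}_{a\sim P_m}Z_m(a)$ rather than to individual actions; this yields a bound uniformly over all $Q_m$ with no explicit dependence on $|\mathcal{A}|$, at the price of the $D_{\mathrm{KL}}(Q_m\|P_m)$ term.
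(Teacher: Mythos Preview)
The paper does not actually prove Theorem~\ref{thm:seldin1}; it is stated as a restatement of Theorem~1 of \cite{seldin2012bern} and is used only as background. Your proposal reproduces the standard PAC-Bayes--Bernstein argument from that reference (the same machinery the present paper later packages as Lemma~\ref{lem:bern} and uses in the proof of Lemma~\ref{lem:multi_risk_bern}): build the exponential supermartingale via $e^x\le 1+x+(e-2)x^2$ for $x\le 1$, average under the data-independent prior $P_m$, apply Markov, then Donsker--Varadhan, and finally union-bound over $m$ with $\delta_m=\delta/(m+1)^2$. This is exactly the intended route, and your steps are correct.

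One small point worth keeping explicit: your Step~2 replaces each conditional variance bound $1/\epsilon_i$ by $1/\epsilon_m$, which indeed requires $\epsilon_i\ge\epsilon_m$ for all $i\le m$; you flag this (either assume $\epsilon_i$ non-increasing or take $\epsilon_m:=\min_{i\le m}\epsilon_i$), and that caveat is necessary for the final $(e-2)\lambda_m/\epsilon_m$ term to come out as stated.
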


This bound states that, with probability at least $1 - \delta$ and for all rounds $m \geq 1$, the expected reward is lower bounded by the empirical reward for $Q_m$ plus a complexity term containing the KL divergence between $Q_m$ and the reference distribution $P_m$, which must be chosen before observing the training data. This lower bound can be used as an objective function for finding a $Q_m$ that maximises $R(Q_m)$. It also motivates learning useful priors, for example from previous tasks. If $P_m$ is chosen such that there is a $Q_m$ with high empirical reward and where $D_{\mathrm{KL}}(Q_m||P_m)$ is small, then the lower bound on $R(Q_m)$ will be large.

\section{PAC-Bayesian Bounds for Lifelong Multi-Armed Bandits}
\label{sec:main}

\subsection{Problem Setup}

We represent a multi-armed bandit (MAB) task as a couple

\begin{equation*}
T_i = (\mathcal{A}, \rho_i),
\end{equation*}

where $\mathcal{A}$ is a finite set of actions and $\rho_i(r|a)$ is a distribution over rewards $r$ that is conditioned on the action $a \in \mathcal{A}$. We assume that rewards are always between 0 and 1 and that tasks are sampled i.i.d. from an environment $\mathcal{T}$.

In each task, the lifelong learning algorithm chooses a sequence of behavior policies $\{b_{ij}\}_{j=1}^{m}$ that is used to sample the actions of a data set $D_i = \{(a_{ij}, r_{ij})\}_{j=1}^{m}$. $b_{ij}$ is a distribution over actions that can depend on all previously observed training data $D_1, \dots, D_{i-1}, D_i^{:j-1}$, where $D_i^{:j-1} = \{(a_{ik}, r_{ik})\}_{k=1}^{j-1}$ is the first $j-1$ action reward pairs for task $i$. We assume that the training data set for every task contains $m$ action-reward pairs. Therefore, the elements of each training data set are distributed as follows:

\begin{equation*}
a_{ij} \sim b_{ij}(a|D_1, \dots, D_{i-1}, D_i^{:j-1}), \qquad r_{ij} \sim \rho_i(r|a_{ij}).
\end{equation*}

In the interest of concise notation, we will let $b_{ij}(a)$ denote the probability mass functions of the behaviour policies conditioned on all previous training data. Let $\mathcal{M}(\mathcal{A})$ denote the set of all probability distributions over $\mathcal{A}$ and let $\mathcal{D}$ denote the set of all possible training data sets. To solve each task, a deterministic learning algorithm $A: \mathcal{D} \times \mathcal{M}(\mathcal{A}) \to \mathcal{M}(\mathcal{A})$ takes a data set $D_i$ and a prior $P$ as inputs and produces a posterior $Q = A(D_i, P)$. We refer to $P$ as a prior and $Q$ as a posterior since $P$ must be chosen before observing any training data from the current task, whereas $Q$ can be chosen afterwards. For task $i$, the expected reward for an action $a$ is:

\begin{equation*}
R_i(a) = \mathop{\mathbb{E}}_{r \sim \rho_i(r|a)}\left[r\right].
\end{equation*}

Using the training set $D_i$, the expected reward can be estimated with an importance-weighted empirical reward estimate:

\begin{equation*}
\widehat{R}_i(a, D_i) = \frac{1}{m}\sum_{j=1}^{m}\frac{1}{b_{ij}(a_{ij})}r_{ij}\mathbb{I}\{a_{ij} = a\}.
\end{equation*}

Due to the potentially very large importance sampling weights $1/b_{ij}(a_{ij})$, this reward estimate can have very high variance depending on the choice of $b_{ij}$. This can be addressed by constraining $b_{ij}$ or by clipping the importance sampling weights to the range $[0, 1+\tau]$. Define the clipped importance-weighted reward estimate as:

\begin{equation*}
\widehat{R}_i^{\tau}(a, D_i) = \frac{1}{m}\sum_{j=1}^{m}\min\left(\frac{1}{b_{ij}(a_{ij})}, 1 + \tau\right)r_{ij}\mathbb{I}\{a_{ij} = a\}.
\end{equation*}

The importance-weighted estimate is an unbiased estimate of the expected reward, and since clipping the importance sampling weights cannot increase the value of reward estimate, we have that:

\begin{equation*}
\mathbb{E}_{D_i}\left[\widehat{R}_i(a, D_i)\right] = R_i(a) \qquad \text{and} \qquad \mathbb{E}_{D_i}\left[\widehat{R}_i^{\tau}(a, D_i)\right] \leq R_i(a).
\end{equation*}

Let $R_i(Q)$, $\widehat{R}_i(Q, D_i)$ and $\widehat{R}_i^{\tau}(Q, D_i)$ denote the expected values of $R_i(a)$, $\widehat{R}_i(a, D_i)$ and $\widehat{R}_i^{\tau}(a, D_i)$ when $a$ is sampled from $Q$.

The inductive bias that the lifelong learning system must learn is the prior $P$. To derive PAC-Bayesian bounds for lifelong learning, we require the notions of a hyperprior and a hyperposterior. The hyperprior $\mathcal{P}$ and hyperposterior $\mathcal{Q}$ are both distributions over the set of possible priors. The hyperprior must be chosen before we observe data from any tasks, whereas the hyperposterior can be chosen afterwards. A well-chosen hyperposterior will assign high probability mass/density to priors that result in posteriors with high expected reward on new tasks. We measure the performance of a hyperposterior by the marginal transfer reward, which is the expected value of the average reward obtained in the first $m$ rounds on a new task $T_{n+1}$ when using the base learning algorithm $A$ with a prior $P$ sampled from $\mathcal{Q}$.

\begin{equation*}
\mathcal{R}(\mathcal{Q}) = \mathop{\mathbb{E}}_{P \sim \mathcal{Q}}\left[\mathop{\mathbb{E}}_{(T_{n+1}, D_{n+1})}\left[\frac{1}{m}\sum_{j=1}^{m}R_{n+1}(A(D_{n+1}^{:j-1}, P))\right]\right].
\end{equation*}

It is called marginal because the expectation with respect to $(T_{n+1}, D_{n+1})$ is not conditioned on the observed tasks and training data sets $(T_{1}, D_{1}), \dots, (T_{n}, D_{n})$. Since the task environment is unknown, the marginal transfer reward cannot be calculated and so we cannot directly maximise it with respect to the hyperposterior. Therefore, we will instead maximise lower bounds on the marginal transfer reward. The PAC-Bayesian bounds in this work bound the difference between the marginal transfer reward and the following empirical estimates, called the (clipped) empirical multi-task reward
\begin{align*}
\widehat{\mathcal{R}}(\mathcal{Q}) = \mathop{\mathbb{E}}_{P \sim \mathcal{Q}}\left[\frac{1}{n}\sum_{i=1}^{n}\frac{1}{m}\sum_{j=1}^{m}\widehat{R}_i(A(D_i^{:j-1}, P), D_i)\right],\\
\widehat{\mathcal{R}}_{\tau}(\mathcal{Q}) = \mathop{\mathbb{E}}_{P \sim \mathcal{Q}}\left[\frac{1}{n}\sum_{i=1}^{n}\frac{1}{m}\sum_{j=1}^{m}\widehat{R}_i^{\tau}(A(D_i^{:j-1}, P), D_i)\right].
\end{align*}

We split the task of bounding the difference between the marginal transfer reward and the empirical multi-task reward into two steps. First, we bound the difference between the marginal transfer reward and an intermediate quantity called the expected multi-task reward. The expected multi-task reward is defined as:

\begin{equation*}
\widetilde{\mathcal{R}}(\mathcal{Q}) = \mathop{\mathbb{E}}_{P \sim \mathcal{Q}}\left[\frac{1}{n}\sum_{i=1}^{n}\frac{1}{m}\sum_{j=1}^{m}R_i(A(D_i^{:j-1}, P))\right].
\end{equation*}

We then bound the difference between the expected and empirical multi-task reward. Finally, we add this difference to the first difference to obtain a lower bound on the marginal transfer reward.

\subsection{Simplifying Assumptions}\label{simple}

Recall from section 3.1, that we assume the tasks $T_1, T_2, \dots$ are sampled i.i.d. from the task environment. This means that the reward distributions $\rho_i$ are independent of each other, but the task data sets $D_i$, which are the only observable quantities, are still dependent on each other since each behaviour policy may depend on data from previous tasks.

We make two simplifying assumptions that make bounding the marginal transfer reward more feasible. Only the first assumption is necessary to derive our bounds and to select the hyperposterior that maximises the bounds. If we want to evaluate the bounds, the second assumption allows us to deal with a term in our bounds that cannot easily be computed.

Our first assumption restricts how the expected reward for each task can be related to the expected reward of previous tasks. We assume that the marginal expected reward for any prior $P$ and any number of training samples $j$ is greater for task $n + k$ than it is for task $n$. More precisely, it is assumed that for all $1 \leq j \leq m$ and $n, k \in \mathbb{N}$

\begin{equation}
\mathop{\mathbb{E}}_{(T_{n+k}, D_{n+k})}\left[R_{n+k}(A(D_{n+k}^{:j}, P))\right] \geq \mathop{\mathbb{E}}_{(T_{n}, D_{n})}\left[R_{n}(A(D_{n}^{:j}, P))\right].\label{eqn:const_exp_assump}
\end{equation}

Since we have assumed that the tasks $T_1, T_2, \dots$ are i.i.d., the expected reward for tasks $n$ and $n+k$ can only be different if $D_n$ and $D_{n+k}$ have different distributions, which happens when the behaviour policies for each task are different. Therefore, this assumption requires that the posterior resulting from data sampled with the behaviour policies for task $n+k$ does not have lower expected reward than the posterior resulting from running the behaviour policies for task $n$. Our proposed algorithms always use behaviour policies that depend greatly on the current hyperposterior. Since the hyperposterior is continually being improved after each new task is observed, we expect this assumption to hold.

Our second assumption restricts how strongly the expected reward for each task can depend on data from previous tasks. We assume that with high probability (over the set of observed tasks and datasets $(T_{1}, D_{1}), \dots, $ $(T_{n-1}, D_{n-1})$)

\begin{align}
\lim_{n \to \infty}\bigg(&\mathop{\mathbb{E}}_{(T_n, D_n)}\condexp*{\frac{1}{m}\sum_{j=1}^{m}R_n(A(D_n^{:j-1}, P))}{D_1, \dots, D_{n-1}}\nonumber\\
&- \mathop{\mathbb{E}}_{(T_n, D_n)}\left[\frac{1}{m}\sum_{j=1}^{m}R_n(A(D_n^{:j-1}, P))\right]\bigg) = 0.\label{eqn:cond_exp_assump}
\end{align}

In other words, once a sufficiently large number of tasks and data sets have been observed, the specific set of observed data sets has a negligible effect on the expected value of the average of the first $m$ rewards, for a high proportion of possible sets of observed tasks and data sets. This assumption holds whenever the sequence of expected rewards converges to the same value. Since we have assumed that the tasks are sampled from a fixed distribution, we expect that this is the case for our setting.

\subsection{Main Results}

The derivation of our main results can be split into two parts. Firstly, in Lemma \ref{lem:transfer_bound}, we obtain a bound on the difference between the marginal transfer reward and the expected multi-task reward. This bound quantifies our uncertainty about the environment $\mathcal{T}$, given a sample of tasks $T_1, \dots, T_n$. Secondly, in Lemma \ref{lem:multi_risk_bern} and Lemma \ref{lem:multi_risk_trunc}, we obtain bounds on the difference between the expected multi-task reward and the empirical multi-task reward. Our main results are obtained by combining these lemmas.

We will begin by stating some auxiliary lemmas that will be used in the proof of Lemma \ref{lem:transfer_bound}. The first is a change of measure inequality from \cite{donsker1975asymptotic} and is referred to as the compression lemma in \cite{banerjee2006bayesian}. We will also refer to it as the compression lemma.

\begin{lemma}[Compression Lemma \cite{donsker1975asymptotic,banerjee2006bayesian}]
\label{lem:compression}
For any measurable function $f(x)$ on $\mathcal{X}$ and any distributions $q$ and $p$ on $\mathcal{X}$, the following inequality holds

\begin{equation*}
\mathop{\mathbb{E}}_{x \sim q}\left[f(x)\right] \leq D_{\mathrm{KL}}(q||p) + \mathrm{ln}\left(\mathop{\mathbb{E}}_{x \sim p} \left[e^{f(x)}\right] \right)
\end{equation*}
\end{lemma}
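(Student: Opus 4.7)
The plan is to prove the inequality via a standard change-of-measure trick combined with Jensen's inequality. The observation driving the proof is that the right-hand side already contains $\ln\mathbb{E}_{x\sim p}[e^{f(x)}]$, so if we can rewrite this log-moment-generating function as an expectation under $q$ (by inserting the Radon–Nikodym derivative $p/q$) and then pull the log inside, we will pick up the KL term plus $\mathbb{E}_{x\sim q}[f(x)]$.

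First I would handle the trivial case: if $q$ is not absolutely continuous with respect to $p$, then $D_{\mathrm{KL}}(q\|p) = +\infty$ and the inequality holds vacuously, so we may assume $q \ll p$ and work with densities (writing $q(x), p(x)$ with the understanding that they are Radon–Nikodym derivatives with respect to a common dominating measure). Next, I would write
\begin{equation*}
\ln\mathop{\mathbb{E}}_{x\sim p}\left[e^{f(x)}\right]
= \ln\mathop{\mathbb{E}}_{x\sim q}\left[\frac{p(x)}{q(x)}e^{f(x)}\right],
\end{equation*}
which is valid on the support of $q$ since $q \ll p$. Then I would apply Jensen's inequality to the concave function $\ln$, giving
\begin{equation*}
\ln\mathop{\mathbb{E}}_{x\sim q}\left[\frac{p(x)}{q(x)}e^{f(x)}\right]
\geq \mathop{\mathbb{E}}_{x\sim q}\left[\ln\!\left(\frac{p(x)}{q(x)}e^{f(x)}\right)\right]
= \mathop{\mathbb{E}}_{x\sim q}[f(x)] - D_{\mathrm{KL}}(q\|p).
\end{equation*}
Rearranging gives exactly the claim.

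An equivalent route, which I would mention as a sanity check, is to introduce the Gibbs tilt $g(x) \propto e^{f(x)} p(x)$, use the identity $f(x) = \ln(g(x)/p(x)) + \ln\mathbb{E}_{x\sim p}[e^{f(x)}]$, take expectation under $q$, and then split $\ln(g/p) = \ln(g/q) + \ln(q/p)$ to obtain $\mathbb{E}_{x\sim q}[f] = -D_{\mathrm{KL}}(q\|g) + D_{\mathrm{KL}}(q\|p) + \ln\mathbb{E}_{x\sim p}[e^{f(x)}]$, after which non-negativity of KL finishes the job.

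There is no real obstacle here; this is a classical identity. The only subtlety worth flagging is the measure-theoretic one: justifying the change of measure on the support of $q$, and covering the non-absolutely-continuous case separately so that the statement holds for arbitrary $q$ and $p$. Everything else is a two-line application of Jensen's inequality.
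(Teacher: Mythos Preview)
The paper does not supply its own proof of this lemma; it simply states it with references to Donsker--Varadhan and Banerjee. Your argument is the standard one and is correct. One small technical remark: the displayed equality
\[
\ln\mathop{\mathbb{E}}_{x\sim p}\!\left[e^{f(x)}\right]
= \ln\mathop{\mathbb{E}}_{x\sim q}\!\left[\tfrac{p(x)}{q(x)}e^{f(x)}\right]
\]
is in general only an inequality ``$\geq$'' when you merely assume $q\ll p$, since $p$ may place mass outside the support of $q$. This does not affect the proof, as the inequality points in the direction you need before applying Jensen. Your alternative Gibbs-tilt argument is also fine and in fact yields the sharper variational identity with equality when $q$ equals the Gibbs distribution.
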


The second auxiliary lemma is Hoeffding's lemma \cite{hoeffding1994probability}.

\begin{lemma}[Hoeffding's Lemma \cite{hoeffding1994probability}]
\label{lem:hoeffding}
Let $X$ be a real-valued random variable such that $Pr(X \in [a,b]) = 1$. For all $\lambda \in \mathbb{R}$

\begin{equation*}
\mathbb{E}\left[e^{\lambda(\mathbb{E}[X] - X)}\right] \leq e^{\frac{\lambda^2}{8}
(b - a)^2}
\end{equation*}
\end{lemma}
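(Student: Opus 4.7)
The plan is a classical convexity argument. First, I would center the random variable by defining $Y = X - \mathbb{E}[X]$, which satisfies $\mathbb{E}[Y] = 0$ and lies in the interval $[a', b'] := [a - \mathbb{E}[X], b - \mathbb{E}[X]]$, still of width $b - a$. The quantity to be bounded becomes $\mathbb{E}[e^{-\lambda Y}]$, since $\lambda(\mathbb{E}[X] - X) = -\lambda Y$.

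Next, I would exploit convexity of $u \mapsto e^{-\lambda u}$ (which holds for every $\lambda \in \mathbb{R}$, covering both signs required by the lemma): for each $y \in [a', b']$, writing $y$ as a convex combination of the endpoints yields $e^{-\lambda y} \leq \frac{b' - y}{b' - a'} e^{-\lambda a'} + \frac{y - a'}{b' - a'} e^{-\lambda b'}$. Taking expectations over $Y$ and using $\mathbb{E}[Y] = 0$ eliminates the linear dependence on $y$, producing a closed-form upper bound on $\mathbb{E}[e^{-\lambda Y}]$ in terms of only $\lambda$, $a'$, and $b'$.

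I would then rewrite this bound as $e^{\phi(h)}$, where $h = \lambda(b - a)$, $p = -a'/(b' - a') \in [0, 1]$, and $\phi(h) = -hp + \ln(1 - p + p e^{h})$. It remains to prove $\phi(h) \leq h^2/8$. A direct check gives $\phi(0) = 0$ and $\phi'(0) = 0$, so Taylor's theorem with remainder reduces the claim to the uniform bound $\phi''(h) \leq 1/4$ on all of $\mathbb{R}$.

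The only nontrivial step, and the main obstacle, is that second-derivative bound. I would handle it by rewriting $\phi''(h) = q(1 - q)$, where $q = p e^{h}/(1 - p + p e^{h}) \in [0, 1]$ is a reparametrisation of the tilted Bernoulli mean, and then invoking the elementary AM-GM inequality $q(1 - q) \leq 1/4$. Plugging back through Taylor's theorem yields $\phi(h) \leq h^2/8 = \lambda^2 (b - a)^2/8$, which is exactly the claimed bound on $\mathbb{E}[e^{-\lambda Y}]$.
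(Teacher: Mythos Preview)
Your argument is the standard textbook proof of Hoeffding's lemma and is correct. The paper itself does not supply a proof of this lemma at all: it is stated as a classical auxiliary result with a citation to Hoeffding and then invoked inside the proof of Lemma~\ref{lem:transfer_bound}. So there is nothing to compare against; your proposal simply fills in a proof the paper omitted.

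One minor bookkeeping point: with your conventions $p = -a'/(b'-a')$ and $h = \lambda(b-a)$, the convexity bound on $\mathbb{E}[e^{-\lambda Y}]$ actually produces $\phi(h) = hp + \ln(1 - p + p e^{-h})$ rather than $-hp + \ln(1 - p + p e^{h})$; the two differ by the substitution $h \mapsto -h$. Since the target bound $h^2/8$ is even in $h$ and your derivative computation $\phi''(h) = q(1-q) \leq 1/4$ holds for either form, this does not affect the conclusion. Just make sure the sign of $h$ is consistent when you write it out formally.
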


Now we are ready to state our bound on the difference between the marginal transfer reward and the expected multi-task reward.

\begin{lemma}
\label{lem:transfer_bound}

If condition (\ref{eqn:const_exp_assump}) is satisfied, then for any hyperprior $\mathcal{P}$, any $\lambda > 0$ and any $\delta \in (0, 1]$, inequality (\ref{eqn:lem1:lower}) holds with probability at least $1 - \delta$ over the tasks and training sets $(T_1, D_1), \dots, (T_n, D_n)$ and for all hyperposteriors $\mathcal{Q}$

\begin{equation}
\mathcal{R}(\mathcal{Q}) \geq \widetilde{\mathcal{R}}(\mathcal{Q}) - c_n - \frac{1}{\lambda}\left(D_{\mathrm{KL}}(\mathcal{Q}||\mathcal{P}) + \frac{\lambda^2}{8n} + \mathrm{ln}(1/\delta)\right),\label{eqn:lem1:lower}
\end{equation}

where

\begin{align*}
c_n = \sup_{P}\bigg\{&\frac{1}{n}\sum_{i=1}^{n}\mathop{\mathbb{E}}_{(T_n, D_n)}\condexp*{\frac{1}{m}\sum_{j=1}^{m}R_n(A(D_n^{:j-1}, P))}{D_1, \dots, D_{n-1}}\\
&- \frac{1}{n}\sum_{i=1}^{n}\mathop{\mathbb{E}}_{(T_n, D_n)}\left[\frac{1}{m}\sum_{j=1}^{m}R_n(A(D_n^{:j-1}, P))\right]\bigg\}
\end{align*}
\end{lemma}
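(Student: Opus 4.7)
The plan is to follow the standard PAC-Bayesian recipe---compression lemma plus an MGF bound via Markov's inequality---but with a martingale argument to handle the cross-task dependencies in $D_1, \dots, D_n$.

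First I would apply the compression lemma (Lemma \ref{lem:compression}) with $f(P) = \lambda(\widetilde{\mathcal{R}}(P) - \mathcal{R}(P))$, where $\widetilde{\mathcal{R}}(P)$ and $\mathcal{R}(P)$ denote the analogues of $\widetilde{\mathcal{R}}(\mathcal{Q})$ and $\mathcal{R}(\mathcal{Q})$ obtained when the hyperposterior is a point mass at $P$. Linearity of expectations yields
$$\lambda\bigl(\widetilde{\mathcal{R}}(\mathcal{Q}) - \mathcal{R}(\mathcal{Q})\bigr) \leq D_{\mathrm{KL}}(\mathcal{Q}||\mathcal{P}) + \ln \mathop{\mathbb{E}}_{P \sim \mathcal{P}}\left[e^{\lambda(\widetilde{\mathcal{R}}(P) - \mathcal{R}(P))}\right].$$
Next I would apply Markov's inequality to the nonnegative random quantity $\mathbb{E}_{P \sim \mathcal{P}}[e^{\lambda(\widetilde{\mathcal{R}}(P) - \mathcal{R}(P))}]$, which with probability at least $1-\delta$ over $(T_1,D_1),\dots,(T_n,D_n)$ is bounded by $\frac{1}{\delta}$ times its expectation over the data. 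A Fubini swap then reduces the task to controlling $\mathbb{E}_{D_{1:n}}[e^{\lambda(\widetilde{\mathcal{R}}(P) - \mathcal{R}(P))}]$ for each fixed prior $P$ drawn from $\mathcal{P}$.

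The main obstacle, and the reason the proof deviates from the i.i.d.\ PAC-Bayes template, is that the per-task summands $Y_i(P) = \frac{1}{m}\sum_{j=1}^m R_i(A(D_i^{:j-1},P))$ are not independent across $i$, because the behaviour policies $b_{ij}$ depend on $D_1,\dots,D_{i-1}$. I would resolve this with a martingale MGF argument. Letting $\mu_i(P) = \mathbb{E}[Y_i(P) \mid D_1,\dots,D_{i-1}]$ and using that $Y_i(P)\in[0,1]$, the centred sequence $Y_i(P) - \mu_i(P)$ is a bounded martingale difference sequence with increments in an interval of width one. Applying Hoeffding's lemma (Lemma \ref{lem:hoeffding}) conditionally to each increment with parameter $\lambda/n$ and iterating the tower rule $n$ times gives
$$\mathop{\mathbb{E}}_{D_{1:n}}\left[e^{\lambda\bigl(\widetilde{\mathcal{R}}(P) - \frac{1}{n}\sum_{i=1}^n \mu_i(P)\bigr)}\right] \leq e^{\lambda^2/(8n)},$$
which already explains the $\frac{\lambda^2}{8n}$ term in the claimed bound.

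Finally I would convert this into a bound involving $\mathcal{R}(P)$. By Assumption~(\ref{eqn:const_exp_assump}) the marginal expected reward is nondecreasing in the task index, so $\mathcal{R}(P) = \mathbb{E}[Y_{n+1}(P)] \geq \frac{1}{n}\sum_{i=1}^n \mathbb{E}[Y_i(P)]$; the residual gap $\frac{1}{n}\sum_i \mu_i(P) - \frac{1}{n}\sum_i \mathbb{E}[Y_i(P)]$ between the running conditional expectations and their marginals is exactly what $c_n$ is defined to upper bound uniformly in $P$. Chaining these two inequalities deterministically inflates the exponent by at most $\lambda c_n$, and combining with the compression step and rearranging for $\mathcal{R}(\mathcal{Q})$ produces the stated inequality. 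The only subtlety is carefully checking the martingale boundedness and the monotonicity/$c_n$ bookkeeping; once those are in place, the rest is algebraic rearrangement.
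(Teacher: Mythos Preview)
Your proposal assembles exactly the right ingredients---the compression lemma, Markov's inequality with a Fubini swap (using that $\mathcal{P}$ is data-free), an iterated conditional application of Hoeffding's lemma to the martingale differences $Y_i(P)-\mu_i(P)$, Assumption~(\ref{eqn:const_exp_assump}) for monotonicity, and the $c_n$ term to absorb the conditional-versus-marginal gap. This is precisely the route the paper takes.

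There is, however, one ordering issue in your plan that would cause it to fail as literally written. You apply Markov's inequality to $\mathbb{E}_{P\sim\mathcal{P}}\bigl[e^{\lambda(\widetilde{\mathcal{R}}(P)-\mathcal{R}(P))}\bigr]$ and then, after Fubini, aim to control $\mathbb{E}_{D_{1:n}}\bigl[e^{\lambda(\widetilde{\mathcal{R}}(P)-\mathcal{R}(P))}\bigr]$ for fixed $P$; only \emph{after} this do you invoke the $c_n$ bookkeeping, saying it ``deterministically inflates the exponent by at most $\lambda c_n$.'' But at that stage you are inside the outer data expectation, and $c_n$ is itself a random quantity (it depends on $D_1,\dots,D_{n-1}$ through the conditional means $\mu_i$). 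You cannot pull $e^{\lambda c_n}$ out of $\mathbb{E}_{D_{1:n}}[\cdot]$, so the chain does not close to the stated bound with $c_n$ appearing as a pathwise term on the high-probability event.

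The paper avoids this by doing the decomposition \emph{first}: it uses Assumption~(\ref{eqn:const_exp_assump}) and the definition of $c_n$ to split $\widetilde{\mathcal{R}}(\mathcal{Q})-\mathcal{R}(\mathcal{Q})$ pathwise into the martingale part plus something bounded by $c_n$, and then applies the compression lemma and Markov \emph{only to the martingale part} $\frac{1}{n}\sum_i\bigl(Y_i(P)-\mu_i(P)\bigr)$. This way $c_n$ sits outside the entire MGF argument as a random additive term on the event of probability $\ge 1-\delta$, exactly as in the statement. Your argument is easily repaired by the same reordering: either apply compression directly to the centred martingale piece, or equivalently use the pathwise bound $\widetilde{\mathcal{R}}(P)-\mathcal{R}(P)\le \bigl[\widetilde{\mathcal{R}}(P)-\tfrac{1}{n}\sum_i\mu_i(P)\bigr]+c_n$ inside the $\mathbb{E}_{P\sim\mathcal{P}}$-integrand (legitimate since $c_n$ is $P$-free) \emph{before} invoking Markov.
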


\begin{proof}
Throughout this proof, let $R_{i,j} = R_{i}(A(D_{i}^{:j-1}, P))$. If the assumption in Equation (\ref{eqn:const_exp_assump}) holds, then the marginal transfer reward $\mathcal{R}(\mathcal{Q})$ can be lower bounded as follows:

\begin{equation*}
\mathop{\mathbb{E}}_{P \sim \mathcal{Q}}\left[\mathop{\mathbb{E}}_{(T_{n+1}, D_{n+1})}\left[\frac{1}{m}\sum_{j=1}^{m}R_{n+1,j}\right]\right] \geq \mathop{\mathbb{E}}_{P \sim \mathcal{Q}}\left[\frac{1}{n}\sum_{i=1}^{n}\mathop{\mathbb{E}}_{(T_{i}, D_{i})}\left[\frac{1}{m}\sum_{j=1}^{m}R_{i,j}\right]\right].
\end{equation*}

Now, we need to upper bound the following quantity:

\begin{equation*}
\mathop{\mathbb{E}}_{P \sim \mathcal{Q}}\left[\frac{1}{n}\sum_{i=1}^{n}\frac{1}{m}\sum_{j=1}^{m}R_{i,j} - \mathop{\mathbb{E}}_{(T_{i}, D_{i})}\left[\frac{1}{m}\sum_{j=1}^{m}R_{i,j}\right]\right].
\end{equation*}

We first rewrite this difference as:

\begin{align}
\mathop{\mathbb{E}}_{P \sim \mathcal{Q}}&\left[\frac{1}{n}\sum_{i=1}^{n}\frac{1}{m}\sum_{j=1}^{m}R_{i,j} - \mathop{\mathbb{E}}_{(T_{i}, D_{i})}\condexp*{\frac{1}{m}\sum_{j=1}^{m}R_{i,j}}{D_1, \dots, D_{i-1}}\right]\label{eqn:lem3_decomp}\\
+ \mathop{\mathbb{E}}_{P \sim \mathcal{Q}}&\left[\frac{1}{n}\sum_{i=1}^{n}\mathop{\mathbb{E}}_{(T_{i}, D_{i})}\condexp*{\frac{1}{m}\sum_{j=1}^{m}R_{i,j}}{D_1, \dots, D_{i-1}} - \mathop{\mathbb{E}}_{(T_{i}, D_{i})}\left[\frac{1}{m}\sum_{j=1}^{m}R_{i,j}\right]\right]\nonumber
\end{align}

From the definition of $c_n$ in the statement of the lemma, we have that the second line of Equation (\ref{eqn:lem3_decomp}) is upper bounded by $c_n$. Next, we upper bound the first line of Equation (\ref{eqn:lem3_decomp}). Using the compression lemma, for any $\lambda > 0$, we have that:

\begin{align}
&\mathop{\mathbb{E}}_{P \sim \mathcal{Q}}\left[\frac{1}{n}\sum_{i=1}^{n}\frac{1}{m}\sum_{j=1}^{m}R_{i,j} - \mathop{\mathbb{E}}_{(T_{i}, D_{i})}\condexp*{\frac{1}{m}\sum_{j=1}^{m}R_{i,j}}{D_1, \dots, D_{i-1}}\right],\label{eqn:lem1:compress}\\
&\leq \frac{1}{\lambda}\left(D_{\mathrm{KL}}(\mathcal{Q}||\mathcal{P}) + \mathrm{ln}\left(\mathop{\mathbb{E}}_{P \sim \mathcal{P}}\left[e^{\frac{\lambda}{n}\sum_{i=1}^{n}\frac{1}{m}\sum_{j=1}^{m}R_{i,j} - \mathop{\mathbb{E}}_{(T_{i}, D_{i})}\condexp*{\frac{1}{m}\sum_{j=1}^{m}R_{i,j}}{D_1, \dots, D_{i-1}}}\right]\right)\right),\nonumber
\end{align}

where $\mathcal{P}$ is another distribution over priors. Next, the exponential term in Equation (\ref{eqn:lem1:compress}) must be upper bounded. It can be rewritten as a product of exponentials. Then, for any $\delta \in (0, 1]$, using Markov's inequality with respect to expectations over $(T_1, D_1), \dots, (T_n, D_n)$, the following inequality holds with probability at least $1 - \delta$

\begin{align*}
\mathop{\mathbb{E}}_{P \sim \mathcal{P}}&\left[\prod_{i=1}^{n}e^{\frac{\lambda}{nm}\left(\sum_{j=1}^{m}R_{i,j} - \mathop{\mathbb{E}}_{(T_{i}, D_{i})}\condexp*{R_{i,j}}{D_1, \dots, D_{i-1}}\right)}\right] \leq\\
&\frac{1}{\delta}\mathop{\mathbb{E}}_{(T_1, D_1), \dots, (T_n, D_n)}\mathop{\mathbb{E}}_{P \sim \mathcal{P}}\left[\prod_{i=1}^{n}e^{\frac{\lambda}{nm}\left(\sum_{j=1}^{m}R_{i,j} - \mathop{\mathbb{E}}_{(T_{i}, D_{i})}\condexp*{R_{i,j}}{D_1, \dots, D_{i-1}}\right)}\right]
\end{align*}

If $\mathcal{P}$ does not depend on the observed tasks and training sets $(T_1, D_1), \dots$, $(T_n, D_n)$, e.g. $\mathcal{P}$ is chosen before observing any tasks, then the order of expectations can be swapped. Since each task $T_i$ is sampled i.i.d. from $\mathcal{T}$ and each training set $D_i$ depends only on the training sets that came before it, the expectation over $(T_1, D_1), \dots,$ $(T_n, D_n)$ can be factorised as follows

\begin{align*}
&\frac{1}{\delta}\mathop{\mathbb{E}}_{P \sim \mathcal{P}}\mathop{\mathbb{E}}_{(T_1, D_1), ..., (T_n, D_n)}\left[\prod_{i=1}^{n}e^{\frac{\lambda}{nm}\left(\sum_{j=1}^{m}R_{i,j} - \mathop{\mathbb{E}}_{(T_{i}, D_{i})}\condexp*{R_{i,j}}{D_1, \dots, D_{i-1}}\right)}\right] = \\
&\frac{1}{\delta}\mathop{\mathbb{E}}_{P \sim \mathcal{P}}\mathop{\mathbb{E}}_{(T_1, D_1), \dots, (T_{n-1}, D_{n-1})}\left[\mathop{\mathbb{E}}_{(T_n, D_n)}\condexp*{\prod_{i=1}^{n}e^{\frac{\lambda}{nm}\left(\sum_{j=1}^{m}R_{i,j} - \mathop{\mathbb{E}}_{(T_{i}, D_{i})}\condexp*{R_{i,j}}{D_1, \dots, D_{i-1}}\right)}}{D_1, ..., D_{n-1}}\right]
\end{align*}

Hoeffding's lemma can be used, with $a = -\frac{1}{nm}\mathop{\mathbb{E}}_{(T_{i}, D_{i})}\condexp*{\sum_{j=1}^{m}R_{i,j}}{D_1, \dots, D_{i-1}}$ and\\$b = \frac{1}{n} - \frac{1}{nm}\mathop{\mathbb{E}}_{(T_{i}, D_{i})}\condexp*{\sum_{j=1}^{m}R_{i,j}}{D_1, \dots, D_{i-1}}$, to upper bound the $n$th term in the product.

\begin{align*}
&\frac{1}{\delta}\mathop{\mathbb{E}}_{P \sim \mathcal{P}}\mathop{\mathbb{E}}_{(T_1, D_1), ..., (T_{n-1}, D_{n-1})}\left[\mathop{\mathbb{E}}_{(T_n, D_n)}\condexp*{\prod_{i=1}^{n}e^{\frac{\lambda}{nm}\left(\sum_{j=1}^{m}R_{i,j} - \mathop{\mathbb{E}}_{(T_{i}, D_{i})}\condexp*{R_{i,j}}{D_1, \dots, D_{i-1}}\right)}}{D_1, ..., D_{n-1}}\right]\\
&\leq \frac{1}{\delta}\mathop{\mathbb{E}}_{P \sim \mathcal{P}}\mathop{\mathbb{E}}_{(T_1, D_1), \dots, (T_{n-1}, D_{n-1})}\left[e^{\frac{\lambda^2}{8n^2}}\prod_{i=1}^{n-1}e^{\frac{\lambda}{nm}\left(\sum_{j=1}^{m}R_{i,j} - \mathop{\mathbb{E}}_{(T_{i}, D_{i})}\condexp*{R_{i,j}}{D_1, \dots, D_{i-1}}\right)}\right],
\end{align*}

Through alternating steps of factorisation and application of Hoeffding's lemma, we have that

\begin{align*}
\frac{1}{\delta}\mathop{\mathbb{E}}_{P \sim \mathcal{P}}&\mathop{\mathbb{E}}_{(T_1, D_1), \dots, (T_{n-1}, D_{n-1})}\left[e^{\frac{\lambda^2}{8n^2}}\prod_{i=1}^{n-1}e^{\frac{\lambda}{nm}\left(\sum_{j=1}^{m}R_{i,j} - \mathop{\mathbb{E}}_{(T_{i}, D_{i})}\condexp*{R_{i,j}}{D_1, \dots, D_{i-1}}\right)}\right]\\
&\leq \frac{1}{\delta}e^{\frac{\lambda^2}{8n}},
\end{align*}

Substituting this into Equation (\ref{eqn:lem1:compress}), we have that for any $\delta \in (0, 1]$ and any $\lambda > 0$, with probability at least $1 - \delta$

\begin{align*}
\mathop{\mathbb{E}}_{P \sim \mathcal{Q}}\left[\mathop{\mathbb{E}}_{(T_{i}, D_{i})}\condexp*{\frac{1}{m}\sum_{j=1}^{m}R_{i,j}}{D_1, \dots, D_{i-1}}\right] &\geq \mathop{\mathbb{E}}_{P \sim \mathcal{Q}}\left[\frac{1}{n}\sum_{i=1}^{n}\frac{1}{m}\sum_{j=1}^{m}R_{i,j}\right]\\
&- \frac{1}{\lambda}\left(D_{\mathrm{KL}}(\mathcal{Q}||\mathcal{P}) + \frac{\lambda^2}{8n} + \mathrm{ln}(1/\delta)\right).
\end{align*}

Combining this with Equation (\ref{eqn:lem3_decomp}) and the fact that $c_n$ upper bounds the second line of Equation (\ref{eqn:lem3_decomp}), we have that, with probability at least $1 - \delta$:

\begin{equation*}
\mathcal{R}(\mathcal{Q}) \geq \widetilde{\mathcal{R}}(\mathcal{Q}) - c_n - \frac{1}{\lambda}\left(D_{\mathrm{KL}}(\mathcal{Q}||\mathcal{P}) + \frac{\lambda^2}{8n} + \mathrm{ln}(1/\delta)\right).
\end{equation*}
\end{proof}

Next, we state our first bound on the difference between the expected multi-task reward and the empirical multi-task reward, which is based on the PAC-Bayesian Bernstein inequality for martingales \cite{seldin2012bern}. First, we state some auxiliary lemmas.

\begin{lemma}[\cite{seldin2012bern}]
\label{lem:bern}
Let $X_1, \dots, X_n$ be a martingale difference sequence (meaning $\mathbb{E}\left[X_i|X_1, \dots, X_{i-1}\right] = 0$), such that $X_i \leq c$ for all $i$ with probability 1. Let $M_n = \sum_{i=1}^{n}X_i$ be the corresponding martingale and $V_{n} = \sum_{i=1}^{n}\mathbb{E}\left[X_i^2|X_1, \dots, X_{i-1}\right]$ be the cumulative variance of this martingale. Then for any $\lambda \in [0, 1/c]$:

\begin{equation*}
\mathop{\mathbb{E}}\left[e^{\lambda M_n - (e - 2)\lambda^2V_n}\right] \leq 1.
\end{equation*}

\end{lemma}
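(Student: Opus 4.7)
The plan is to show that $Z_n := \exp\!\bigl(\lambda M_n - (e-2)\lambda^2 V_n\bigr)$ is a supermartingale with $\mathbb{E}[Z_0] = 1$, from which the claim follows immediately by iterating the tower property. The entire argument rests on one elementary scalar inequality, so the strategy is: (i) establish that inequality, (ii) use it to control $\mathbb{E}[e^{\lambda X_i}\mid \mathcal{F}_{i-1}]$, and (iii) package the result as a supermartingale statement.

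First I would prove the pointwise bound
\begin{equation*}
e^{y} \leq 1 + y + (e-2)y^2 \qquad \text{for all } y \leq 1.
\end{equation*}
The standard route is to observe that $\phi(y) := (e^y - 1 - y)/y^2$ is nondecreasing on $\mathbb{R}$, so for $y \leq 1$ we have $\phi(y) \leq \phi(1) = e - 2$, which rearranges to the desired inequality. Since $\lambda \in [0, 1/c]$ and $X_i \leq c$ a.s., the substitution $y = \lambda X_i$ is legitimate, giving
\begin{equation*}
e^{\lambda X_i} \leq 1 + \lambda X_i + (e-2)\lambda^2 X_i^2 \qquad \text{a.s.}
\end{equation*}

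Next, I would take the conditional expectation of this inequality with respect to $\mathcal{F}_{i-1} = \sigma(X_1, \dots, X_{i-1})$. The martingale difference property kills the linear term, so
\begin{equation*}
\mathbb{E}\bigl[e^{\lambda X_i} \mid \mathcal{F}_{i-1}\bigr] \leq 1 + (e-2)\lambda^2\, \mathbb{E}[X_i^2 \mid \mathcal{F}_{i-1}] \leq \exp\!\bigl((e-2)\lambda^2\, \mathbb{E}[X_i^2 \mid \mathcal{F}_{i-1}]\bigr),
\end{equation*}
using $1 + x \leq e^x$ in the last step. Multiplying both sides by $e^{-(e-2)\lambda^2 \mathbb{E}[X_i^2\mid \mathcal{F}_{i-1}]}$ and by the $\mathcal{F}_{i-1}$-measurable factor $Z_{i-1} = \exp(\lambda M_{i-1} - (e-2)\lambda^2 V_{i-1})$ yields
\begin{equation*}
\mathbb{E}[Z_i \mid \mathcal{F}_{i-1}] \leq Z_{i-1},
\end{equation*}
i.e. $(Z_i)_{i \geq 0}$ is a nonnegative supermartingale with $Z_0 = 1$.

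Finally, I would iterate the tower property: $\mathbb{E}[Z_n] = \mathbb{E}[\mathbb{E}[Z_n \mid \mathcal{F}_{n-1}]] \leq \mathbb{E}[Z_{n-1}] \leq \cdots \leq \mathbb{E}[Z_0] = 1$, which is precisely the stated inequality. The only real obstacle is step (i), the sharp scalar bound $e^y \leq 1 + y + (e-2)y^2$ on $y \leq 1$; everything downstream is a routine supermartingale argument, and in particular the constraint $\lambda c \leq 1$ is used only to justify that pointwise bound. No integrability subtleties arise because $X_i \leq c$ and $V_n \geq 0$ keep $Z_n$ bounded above by $e^{\lambda M_n}$, which is integrable on each step by an easy induction.
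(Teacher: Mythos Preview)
Your argument is correct and is in fact the standard proof of this Bernstein-type martingale inequality. The paper itself does not prove Lemma~\ref{lem:bern}; it merely states it as an auxiliary result cited from \cite{seldin2012bern}, so there is no in-paper proof to compare against. Your derivation---the scalar bound $e^y \le 1 + y + (e-2)y^2$ for $y\le 1$ via monotonicity of $(e^y-1-y)/y^2$, followed by the supermartingale construction for $Z_i = \exp(\lambda M_i - (e-2)\lambda^2 V_i)$---is exactly the argument given in the cited source, so nothing further is needed.
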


To utilise Lemma \ref{lem:bern}, we construct a martingale difference sequence from the training data of each task. Define:

\begin{equation*}
X_{ij}(a) = \frac{1}{b_{ij}(a_{ij})}\mathbb{I}\{a_{ij}=a\}r_{ij} - R_i(a).
\end{equation*}

If we let $b_{\mathrm{min}} = \min_{i\leq n, j\leq m, a \in \mathcal{A}}(b_{ij}(a))$ and assume $b_{\mathrm{min}} > 0$, then $X_{ij}(a) \leq 1/b_{\mathrm{min}}$ for all $i$, $j$ and $a$. Next, we verify that $X_{ij}(a)$ form a martingale difference sequence. For any $i \leq n$ and $j \leq m$:

\begin{align*}
\mathbb{E}&\condexp*{X_{ij}(a)}{D_1, \dots, D_{i-1}, D_i^{:j-1}}\\
&= \mathbb{E}\condexp*{\frac{1}{b_{ij}(a_{ij})}\mathbb{I}\{a_{ij}=a\}r_{ij} - R_i(a)}{D_1, \dots, D_{i-1}, D_i^{:j-1}}\\
&= 0.
\end{align*}

Therefore $M_{ij}(a) = \sum_{k=1}^{j}X_{ik}(a)$ is a martingale that satisfies the conditions of Lemma \ref{lem:bern}. Let

\begin{equation*}
V_{ij}(a) = \sum_{k=1}^{j}\mathbb{E}\condexp*{(X_{ik}(a))^2}{D_1, \dots, D_{i-1}, D_{i}^{:k-1}}
\end{equation*}

be the cumulative variance of this martingale. We use the following upper bound on $V_{ij}(a)$ from \cite{seldin2012bern}:

\begin{lemma}[\cite{seldin2012bern}]
\label{lem:var_bound}
For any $i \leq n$, any $j \leq m$ and any $a \in \mathcal{A}$:

\begin{equation*}
V_{ij}(a) \leq \frac{j}{b_{\mathrm{min}}}
\end{equation*}
\end{lemma}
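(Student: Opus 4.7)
The plan is straightforward: control each term $\mathbb{E}\condexp*{X_{ik}(a)^2}{D_1,\dots,D_{i-1},D_i^{:k-1}}$ separately by $1/b_{\mathrm{min}}$, then sum $k=1,\dots,j$ to obtain the stated bound.

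First, write $X_{ik}(a) = Y_{ik} - R_i(a)$, where $Y_{ik} = \frac{1}{b_{ik}(a_{ik})}\mathbb{I}\{a_{ik}=a\} r_{ik}$. Conditioning on the past, $a_{ik}$ is drawn from $b_{ik}$ and $r_{ik}$ from $\rho_i(r\mid a_{ik})$; a direct calculation shows the conditional expectation of $Y_{ik}$ equals $R_i(a)$ (this is exactly the unbiasedness calculation already used to check the martingale difference property). Hence $X_{ik}(a)$ is $Y_{ik}$ minus its conditional mean, so its conditional second moment equals its conditional variance, which is bounded above by the conditional second moment of $Y_{ik}$ itself.

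Next, compute that second moment. Squaring kills nothing in the indicator, and produces a factor $1/b_{ik}(a_{ik})^2$; only the event $\{a_{ik}=a\}$ contributes, which has conditional probability $b_{ik}(a)$. This yields
\begin{equation*}
\mathbb{E}\condexp*{Y_{ik}^2}{D_1,\dots,D_{i-1},D_i^{:k-1}} = \frac{1}{b_{ik}(a)} \mathop{\mathbb{E}}_{r\sim \rho_i(r\mid a)}[r^2] \leq \frac{1}{b_{ik}(a)} \leq \frac{1}{b_{\mathrm{min}}},
\end{equation*}
where the first inequality uses $r\in[0,1]$ so $r^2\leq r \leq 1$, and the second uses the definition of $b_{\mathrm{min}}$.

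Summing over $k=1,\dots,j$ gives $V_{ij}(a) \leq j / b_{\mathrm{min}}$. There is no real obstacle here; the only thing to keep straight is that the importance-sampling weight contributes one factor of $1/b_{ik}(a)$ after the indicator cancels the other, which is precisely why the second moment scales like $1/b_{\mathrm{min}}$ rather than $1/b_{\mathrm{min}}^2$.
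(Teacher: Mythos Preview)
Your argument is correct. The paper does not actually prove this lemma; it simply cites the bound from \cite{seldin2012bern}, so there is no in-paper proof to compare against, but your computation is exactly the standard derivation: bound the conditional second moment of each increment by $\mathbb{E}[Y_{ik}^2\mid\cdot]=\frac{1}{b_{ik}(a)}\mathbb{E}_{r\sim\rho_i(\cdot\mid a)}[r^2]\le 1/b_{\mathrm{min}}$ and sum.
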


Finally, let $(\mathcal{Q}, A_k^n)$ denote the joint distribution over $(P, a_1, \dots, a_n)$ where $P \sim \mathcal{Q}$ and $a_i \sim A(D_i^{:k}, P)$ for $i = 1, \dots, n$. Similarly, let $(\mathcal{P}, P^n)$ denote the joint distribution over $(P, a_1, \dots, a_n)$ where $P \sim \mathcal{P}$ and $a_i \sim P$ for $i = 1, \dots, n$. We have that

\begin{align}
\label{eqn:kl}
D_{\mathrm{KL}}&((\mathcal{Q}, A_k^n)||(\mathcal{P}, P^n)) = \mathop{\mathbb{E}}_{P \sim \mathcal{Q}}\mathop{\mathbb{E}}_{a_i \sim A(D_i^{:k-1},P)}\left[\mathrm{ln}\frac{\mathcal{Q}(P)\prod_{i=1}^{n}A(D_i^{:k}, P)(a_i)}{\mathcal{P}(P)\prod_{i=1}^{n}P(a_i)}\right]\\
&= \mathop{\mathbb{E}}_{P \sim \mathcal{Q}}\mathop{\mathbb{E}}_{a_i \sim A(D_i^{:k-1},P)}\left[\mathrm{ln}\frac{\mathcal{Q}(P)\prod_{i=1}^{n}A(D_i^{:k}, P)(a_i)}{\mathcal{P}(P)\prod_{i=1}^{n}P(a_i)}\right]\nonumber\\
&= \mathop{\mathbb{E}}_{P \sim \mathcal{Q}}\mathop{\mathbb{E}}_{a_i \sim A(D_i^{:k-1},P)}\left[\mathrm{ln}\frac{\mathcal{Q}(P)}{\mathcal{P}(P)} + \sum_{i=1}^{n}\mathrm{ln}\frac{A(D_i^{:k}, P)(a_i)}{P(a_i)}\right]\nonumber\\
&= \mathop{\mathbb{E}}_{P \sim \mathcal{Q}}\left[\mathrm{ln}\frac{\mathcal{Q}(P)}{\mathcal{P}(P)}\right] + \mathop{\mathbb{E}}_{P \sim \mathcal{Q}}\left[\sum_{i=1}^{n}\mathop{\mathbb{E}}_{a_i \sim A(D_i^{:k}, P)}\left[\mathrm{ln}\frac{A(D_i^{:k}, P)(a_i)}{P(a_i)}\right]\right]\nonumber\\
&= D_{\mathrm{KL}}(\mathcal{Q}||\mathcal{P}) + \mathop{\mathbb{E}}_{P \sim \mathcal{Q}}\left[\sum_{i=1}^{n}D_{\mathrm{KL}}(A(D_i^{:k}, P)||P)\right]\nonumber
\end{align}

Now we are ready to state and prove our bound on the difference between $\widehat{\mathcal{R}}(\mathcal{Q})$ and $\widetilde{\mathcal{R}}(\mathcal{Q})$.

\begin{lemma}
\label{lem:multi_risk_bern}
For any hyperprior $\mathcal{P}$, any $\lambda \in [0, mb_{\mathrm{min}}]$, and any $\delta \in (0, 1]$, inequality (\ref{eqn:multib_lower}) holds with probability at least $1 - \delta$ over the training sets $\{D_1, \dots, D_n\}$ and for all hyperposteriors $\mathcal{Q}$

\begin{align}
\widetilde{\mathcal{R}}(\mathcal{Q}) &\geq \widehat{\mathcal{R}}(\mathcal{Q}) -\frac{1}{n\lambda}D_{\mathrm{KL}}(\mathcal{Q}||\mathcal{P}) - \frac{1}{nm\lambda}\mathop{\mathbb{E}}_{P \sim \mathcal{Q}}\left[\sum_{i=1}^{n}\sum_{j=1}^{m}D_{\mathrm{KL}}(A(D_{i}^{:j-1}, P)||P)\right]\nonumber\\
&- \frac{\lambda(e - 2)}{b_{\mathrm{min}} m} - \frac{1}{n\lambda}\mathrm{ln}(m/\delta).\label{eqn:multib_lower}
\end{align}
\end{lemma}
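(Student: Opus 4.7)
The key observation is the decomposition
\begin{equation*}
\widehat{\mathcal{R}}(\mathcal{Q}) - \widetilde{\mathcal{R}}(\mathcal{Q}) = \frac{1}{m}\sum_{j=1}^{m}\Delta_j, \qquad \Delta_j := \mathop{\mathbb{E}}_{(P, a_1, \dots, a_n) \sim (\mathcal{Q}, A_{j-1}^{n})}\left[\frac{1}{nm}\sum_{i=1}^{n}M_{im}(a_i)\right],
\end{equation*}
which follows from the identity $\widehat{R}_i(a, D_i) - R_i(a) = M_{im}(a)/m$ together with the definition of $(\mathcal{Q}, A_{j-1}^{n})$. The plan is to bound each $\Delta_j$ with confidence $1 - \delta/m$ by a PAC-Bayesian Bernstein argument and then take a union bound over $j \in \{1, \dots, m\}$.

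For a fixed $j$, I would apply the compression lemma (Lemma \ref{lem:compression}) on the joint space of $(P, a_1, \dots, a_n)$ with $q = (\mathcal{Q}, A_{j-1}^{n})$, $p = (\mathcal{P}, P^{n})$ and the function
\begin{equation*}
f(P, a_1, \dots, a_n) = \lambda'\sum_{i=1}^{n}M_{im}(a_i) - (e-2)(\lambda')^2\sum_{i=1}^{n} V_{im}(a_i),
\end{equation*}
for some $\lambda' \in (0, b_{\mathrm{min}}]$. The KL term splits according to Equation (\ref{eqn:kl}). For the exponential term, since $p$ is independent of the training data one can swap expectations; under $p$ the product $\prod_i e^{\lambda' M_{im}(a_i) - (e-2)(\lambda')^2 V_{im}(a_i)}$ factorises across $i$, and peeling factors from $i = n$ down to $i = 1$ using Lemma \ref{lem:bern} shows $\mathop{\mathbb{E}}_{D_1,\dots,D_n}\mathop{\mathbb{E}}_p[e^f] \leq 1$. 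Markov's inequality then yields $\mathop{\mathbb{E}}_p[e^f] \leq m/\delta$ with probability at least $1 - \delta/m$ over the data.

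Rearranging the compression inequality and using Lemma \ref{lem:var_bound} to upper bound $\mathop{\mathbb{E}}_q[\sum_i V_{im}(a_i)] \leq nm/b_{\mathrm{min}}$ gives $\Delta_j \leq (D_{\mathrm{KL}}(\mathcal{Q}||\mathcal{P}) + \mathop{\mathbb{E}}_{P \sim \mathcal{Q}}[\sum_i D_{\mathrm{KL}}(A(D_i^{:j-1}, P)||P)] + \ln(m/\delta))/(nm\lambda') + (e-2)\lambda'/b_{\mathrm{min}}$. Averaging over $j$ collapses the $i$-sum of posterior-to-prior KLs into the double sum appearing in the statement, a union bound over $j$ gives overall confidence $1 - \delta$, and reparametrising $\lambda = m\lambda'$ (turning $\lambda' \in (0, b_{\mathrm{min}}]$ into $\lambda \in (0, mb_{\mathrm{min}}]$) yields Equation (\ref{eqn:multib_lower}). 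The main delicate step is the inductive peeling of exponential factors: one must verify that extra conditioning on the virtual $(P, a_1, \dots, a_n)$ preserves the martingale difference property of $X_{ik}(a_i)$ with respect to $\sigma(D_1, \dots, D_{i-1}, D_i^{:k-1})$. This holds because the behaviour policies depend only on past data, so $(P, a_1, \dots, a_n)$ are independent of the randomness generating $D_i$ given $D_1, \dots, D_{i-1}$, and Lemma \ref{lem:bern} applies at each layer with the uniform bound $c = 1/b_{\mathrm{min}}$. The remaining bookkeeping is routine.
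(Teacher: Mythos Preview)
Your proposal is correct and follows essentially the same route as the paper's proof: the same decomposition $\widehat{\mathcal{R}}(\mathcal{Q}) - \widetilde{\mathcal{R}}(\mathcal{Q}) = \frac{1}{m}\sum_{j}\Delta_j$, the same application of the compression lemma on the joint space with reference measure $(\mathcal{P}, P^n)$, the same peeling argument via Lemma~\ref{lem:bern} after swapping expectations, the same use of Lemma~\ref{lem:var_bound}, a union bound over $j$, and the final reparametrisation $\lambda = m\lambda'$. The only cosmetic difference is that you apply Markov with confidence $\delta/m$ per $j$ and then union bound, whereas the paper first derives a per-$k$ bound at level $\delta$ and then replaces $\delta$ by $\delta/m$; these are equivalent, and your explicit remark about why conditioning on the virtual $(P,a_1,\dots,a_n)$ preserves the martingale structure is a nice clarification the paper leaves implicit.
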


\begin{proof}

Using the compression lemma, we have that for any $k = 1, \dots, m$

\begin{align}
\mathop{\mathbb{E}}_{P \sim \mathcal{Q}, a_i \sim A(D_i^{:k-1},P)}&\left[\sum_{i=1}^{n}M_{im}(a_i) - \lambda(e-2)\sum_{i=1}^{n}V_{im}(a_i)\right]\nonumber\\
&\leq \frac{1}{\lambda}D_{\mathrm{KL}}((\mathcal{Q}, A_{k-1}^n)||(\mathcal{P}, P^n))\nonumber\\ &+\frac{1}{\lambda}\mathrm{ln}\left(\mathop{\mathbb{E}}_{P \sim \mathcal{P}, a_i \sim P}\left[e^{\lambda\sum_{i=1}^{n}M_{im}(a_i) - \lambda^2(e-2)\sum_{i=1}^{n}M_{im}(a_i)}\right]\right)\label{eqn:lem2:compressb}.
\end{align}

Now, we need to upper bound the term inside the logarithm. For any $\delta \in (0, 1]$, using Markov's inequality with respect to expectations over $D_1, \dots, D_n$, the following inequality holds with probability greater than $1 - \delta$

\begin{align*}
\mathop{\mathbb{E}}_{P \sim \mathcal{P}, a_i \sim P}&\left[\prod_{i=1}^{n}e^{\lambda\sum_{i=1}^{n}M_{im}(a_i) - \lambda^2(e-2)\sum_{i=1}^{n}V_{im}(a_i)}\right]\\
&\leq \frac{1}{\delta}\mathop{\mathbb{E}}_{D_1, \dots, D_n}\mathop{\mathbb{E}}_{P \sim \mathcal{P}, a_i \sim P}\left[\prod_{i=1}^{n}e^{\lambda\sum_{i=1}^{n}M_{im}(a_i) - \lambda^2(e-2)\sum_{i=1}^{n}V_{im}(a_i)}\right].
\end{align*}

If $\mathcal{P}$ does not depend on any of the training sets $D_1, \dots, D_n$, then the order of the expectations can be swapped. Since each training set is only dependent on the training sets that came before it, the expectation over $D_1, \dots, D_n$ can be factorised.

\begin{align*}
\frac{1}{\delta}&\mathop{\mathbb{E}}_{P \sim \mathcal{P}, a_i \sim P}\mathop{\mathbb{E}}_{D_1, \dots, D_n}\left[\prod_{i=1}^{n}e^{\lambda M_{im}(a_i) - \lambda^2(e-2)V_{im}(a_i)}\right]\\
&= \frac{1}{\delta}\mathop{\mathbb{E}}_{P \sim \mathcal{P}, a_i \sim P}\mathop{\mathbb{E}}_{D_1, \dots, D_{n-1}}\left[\mathop{\mathbb{E}}_{D_n}\condexp*{\prod_{i=1}^{n}e^{\lambda M_{im}(a_i) - \lambda^2(e-2)V_{im}(a_i)}}{D_1, \dots, D_{n-1}}\right].
\end{align*}

Using Lemma \ref{lem:bern}, the $n$th term of the product can be upper bounded.

\begin{align*}
\frac{1}{\delta}\mathop{\mathbb{E}}_{P \sim \mathcal{P}, a_i \sim P}&\mathop{\mathbb{E}}_{D_1, \dots, D_{n-1}}\left[\mathop{\mathbb{E}}_{D_n}\condexp*{\prod_{i=1}^{n}e^{\lambda M_{im}(a_i) - \lambda^2(e-2)V_{im}(a_i)}}{D_1, \dots, D_{n-1}}\right]\\
&\leq \frac{1}{\delta}\mathop{\mathbb{E}}_{P \sim \mathcal{P}, a_i \sim P}\mathop{\mathbb{E}}_{D_1, \dots, D_{n-1}}\left[1 \times \prod_{i=1}^{n-1}e^{\lambda M_{im}(a_i) - \lambda^2(e-2)V_{im}(a_i)}\right]
\end{align*}

Through alternating steps of factorisation and application of Lemma \ref{lem:bern}, we have that

\begin{equation*}
\frac{1}{\delta}\mathop{\mathbb{E}}_{P \sim \mathcal{P}, a_i \sim P}\mathop{\mathbb{E}}_{D_1, \dots, D_n}\left[\prod_{i=1}^{n}e^{\lambda M_i^m(a_i)}\right] \leq \frac{1}{\delta}.
\end{equation*}

Substituting this into Equation (\ref{eqn:lem2:compressb}), we have that for any $\delta \in (0, 1]$, any $\lambda \in [0, b_{\mathrm{min}}]$ and any $k \in \{1, \dots, m\}$, the following inequality holds with probability at least $1 - \delta$.

\begin{align*}
\mathop{\mathbb{E}}_{P \sim \mathcal{Q}, a_i \sim A(D_i^{:k-1},P)}&\left[\sum_{i=1}^{n}M_{im}(a_i) - \lambda(e-2)V_{im}(a_i)\right]\\
&\leq \frac{1}{\lambda}D_{\mathrm{KL}}((\mathcal{Q}, A_{k-1}^n)||(\mathcal{P}, P^n)) + \frac{1}{\lambda}\mathrm{ln}(1/\delta)
\end{align*}

By rearranging this inequality and applying the cumulative variance bound from Lemma \ref{lem:var_bound}, we have that with probability at least $1 - \delta$:

\begin{align}
\label{eqn:pre_unionb}
\mathop{\mathbb{E}}_{P \sim \mathcal{Q}, a_i \sim A(D_i^{:k-1},P)}\left[\sum_{i=1}^{n}M_{im}(a_i)\right] &\leq \frac{1}{\lambda}D_{\mathrm{KL}}((\mathcal{Q}, A_{k-1}^n)||(\mathcal{P}, P^n))\\
&+ \frac{\lambda(e-2)nm}{b_{\mathrm{min}}} + \frac{1}{\lambda}\mathrm{ln}(1/\delta)\nonumber
\end{align}

Using the union bound, if we replace $\mathrm{ln}(1/\delta)$ with $\mathrm{ln}(m/\delta)$, then Equation (\ref{eqn:pre_unionb}) holds simultaneously for all $k = 1, \dots, m$ with probability at least $1 - \delta$. From the definitions of $\widetilde{\mathcal{R}}(\mathcal{Q})$, $\widehat{\mathcal{R}}(\mathcal{Q})$ and $M_{ij}(a)$:

\begin{align*}
\widehat{\mathcal{R}}(\mathcal{Q}) - \widetilde{\mathcal{R}}(\mathcal{Q}) &= \frac{1}{m}\sum_{k=1}^{m}\frac{1}{nm}\mathop{\mathbb{E}}_{P \sim \mathcal{Q}, a_i \sim A(D_i^{:k-1},P)}\left[\sum_{i=1}^{n}M_{im}(a_i)\right]
\end{align*}

Substituting in the result of Equation (\ref{eqn:pre_unionb}), we have that with probability at least $1 - \delta$:

\begin{align*}
\widehat{\mathcal{R}}(\mathcal{Q}) - \widetilde{\mathcal{R}}(\mathcal{Q}) &\leq \frac{1}{m}\sum_{k=1}^{m}\frac{1}{nm}\frac{1}{\lambda}D_{\mathrm{KL}}((\mathcal{Q}, A_{k-1}^n)||(\mathcal{P}, P^n))\\
&+ \frac{\lambda(e-2)}{b_{\mathrm{min}}} + \frac{1}{nm\lambda}\mathrm{ln}(m/\delta)
\end{align*}

By using Equation (\ref{eqn:kl}) and rearranging this inequality, we obtain:

\begin{align*}
\widetilde{\mathcal{R}}(\mathcal{Q}) &\geq \widehat{\mathcal{R}}(\mathcal{Q}) - \frac{1}{nm\lambda}D_{\mathrm{KL}}(\mathcal{Q}||\mathcal{P}) - \frac{1}{nm^2\lambda}\mathop{\mathbb{E}}_{P \sim \mathcal{Q}}\left[\sum_{i=1}^{n}\sum_{j=1}^{m}D_{\mathrm{KL}}(A(D_i^{:j-1}, P)||P)\right]\\
&- \frac{\lambda(e-2)}{b_{\mathrm{min}}} - \frac{1}{nm\lambda}\mathrm{ln}(m/\delta)
\end{align*}

Finally, the substitution $\lambda^{\prime} = m\lambda$ yields the statement of the lemma. The requirement $\lambda \in [0, b_{\mathrm{min}}]$ then becomes $\lambda^{\prime} \in [0, mb_{\mathrm{min}}]$.

\end{proof}

Next we state our bound on the difference between $\widehat{\mathcal{R}}_{\tau}(\mathcal{Q})$ and $\widetilde{\mathcal{R}}(\mathcal{Q})$. Instead of Bernstein's inequality, this bound uses the Hoeffding-Azuma inequality for supermartingales with bounded differences \cite{azuma1967weighted} \cite{cesa2006prediction}. A proof can be found in the appendix.

\begin{lemma}
\label{lem:azuma}
Let $X_1, \dots, X_n$ be a supermartingale difference sequence (meaning that $\mathbb{E}[X_i|X_1, \dots, X_{i-1}] \leq 0$) such that $X_i \in [a_i, b_i]$ for all $i$ with probability 1. Let $M_n = \sum_{i=1}^{n}X_i$ be the corresponding supermartingale. Then for any $\lambda > 0$:

\begin{equation*}
\mathbb{E}\left[e^{\lambda M_n}\right] \leq e^{\frac{\lambda^2}{8}\sum_{i=1}^{n}(b_i - a_i)^2}
\end{equation*}
\end{lemma}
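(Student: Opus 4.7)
}

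The plan is to prove the statement by induction on $n$, using the tower property of conditional expectation together with Hoeffding's lemma (Lemma \ref{lem:hoeffding}) applied conditionally at each step. The supermartingale assumption $\mathbb{E}[X_i \mid X_1, \dots, X_{i-1}] \leq 0$ enters only at the very end, when we discard a factor that is at most $1$; apart from this, the argument is identical to the standard proof of the Hoeffding--Azuma inequality for martingale differences.

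First I would write
\begin{equation*}
\mathbb{E}\left[e^{\lambda M_n}\right] = \mathbb{E}\left[e^{\lambda M_{n-1}} \cdot \mathbb{E}\left[e^{\lambda X_n} \mid X_1, \dots, X_{n-1}\right]\right],
\end{equation*}
and then estimate the inner conditional expectation. Let $\mu_n = \mathbb{E}[X_n \mid X_1, \dots, X_{n-1}]$. Decomposing $X_n = (X_n - \mu_n) + \mu_n$, I would write
\begin{equation*}
\mathbb{E}\left[e^{\lambda X_n} \mid X_1, \dots, X_{n-1}\right] = e^{\lambda \mu_n} \cdot \mathbb{E}\left[e^{\lambda(X_n - \mu_n)} \mid X_1, \dots, X_{n-1}\right].
\end{equation*}
Conditioned on $X_1, \dots, X_{n-1}$, the random variable $X_n - \mu_n$ is zero-mean and takes values in an interval of length $b_n - a_n$, so Hoeffding's lemma (applied with $-\lambda$ to match the sign convention used in Lemma \ref{lem:hoeffding}) yields
\begin{equation*}
\mathbb{E}\left[e^{\lambda(X_n - \mu_n)} \mid X_1, \dots, X_{n-1}\right] \leq e^{\frac{\lambda^2}{8}(b_n - a_n)^2}.
\end{equation*}
At this point the supermartingale hypothesis $\mu_n \leq 0$ together with $\lambda > 0$ gives $e^{\lambda \mu_n} \leq 1$, so the whole conditional expectation is upper bounded by $e^{\lambda^2 (b_n - a_n)^2 / 8}$.

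Substituting this bound back and iterating the argument (or formalising it as an induction on $n$, with trivial base case $n = 0$), I would obtain
\begin{equation*}
\mathbb{E}\left[e^{\lambda M_n}\right] \leq \mathbb{E}\left[e^{\lambda M_{n-1}}\right] \cdot e^{\frac{\lambda^2}{8}(b_n - a_n)^2} \leq \prod_{i=1}^{n} e^{\frac{\lambda^2}{8}(b_i - a_i)^2} = e^{\frac{\lambda^2}{8}\sum_{i=1}^{n}(b_i - a_i)^2},
\end{equation*}
which is the desired inequality. There is no real obstacle here: the only subtle point is remembering to use $\mu_n \leq 0$ and $\lambda > 0$ to drop the $e^{\lambda \mu_n}$ factor, which is precisely the modification that distinguishes the supermartingale version from the martingale version of the Hoeffding--Azuma bound.
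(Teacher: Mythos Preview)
Your proposal is correct and follows essentially the same route as the paper's proof: both use the tower property to peel off the last term, apply Hoeffding's lemma conditionally to bound $\mathbb{E}[e^{\lambda X_n}\mid X_1,\dots,X_{n-1}] \le e^{\lambda\mu_n}e^{\lambda^2(b_n-a_n)^2/8}$, drop the factor $e^{\lambda\mu_n}\le 1$ via the supermartingale condition and $\lambda>0$, and then iterate. The paper writes this as a direct recursion rather than phrasing it as an induction, but the argument is identical.
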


To utilise Lemma \ref{lem:azuma}, we construct supermartingales with bounded differences from the training data of each task. Define:

\begin{equation}
X_{ij}^{\tau}(a) = \min\left(\frac{1}{b_{ij}(a_{ij})}, 1+\tau\right)\mathbb{I}\{a_{ij}=a\}r_{ij} - R_i(a)\label{eqn:supermax}
\end{equation}

Due to the clipped importance sampling weight, $X_{ij}^{\tau}(a) \in [0, 1+\tau]$ for all $i \leq n$, $j \leq m$ and $a \in \mathcal{A}$. Next, we verify that $\{X_{ij}^{\tau}(a)\}_{j=1}^{m}$ is a supermartingale difference sequence. For any $i \leq n$ and $j \leq m$:

\begin{align*}
\mathbb{E}&\condexp*{X_{ij}^{\tau}(a)}{D_1, \dots, D_{i-1}, D_i^{:j-1}}\\
&= \mathbb{E}\condexp*{\min\left(\frac{1}{b_{ij}(a_{ij})}, 1+\tau\right)\mathbb{I}\{a_{ij}=a\}r_{ij} - R_i(a)}{D_1, \dots, D_{i-1}, D_i^{:j-1}}\\
&\leq \mathbb{E}\condexp*{\frac{1}{b_{ij}(a_{ij})}\mathbb{I}\{a_{ij}=a\}r_{ij} - R_i(a)}{D_1, \dots, D_{i-1}, D_i^{:j-1}}\\
&= 0.
\end{align*}

Now we state our bound on the difference between $\widetilde{\mathcal{R}}(\mathcal{Q})$ and $\widehat{\mathcal{R}}_{\tau}(\mathcal{Q})$ in Lemma \ref{lem:multi_risk_trunc}. The proof of Lemma \ref{lem:multi_risk_trunc} follows the proof of Lemma \ref{lem:multi_risk_bern} except that each application of Bernstein's inequality is replaced with an application of Lemma \ref{lem:azuma}. Therefore, we state the proof in the Appendix.

\begin{lemma}
\label{lem:multi_risk_trunc}
For any hyperprior $\mathcal{P}$, any $\lambda > 0$, any $\tau > 0$ and any $\delta \in (0, 1]$, inequality (\ref{eqn:multi_lower}) holds with probability at least $1 - \delta$ over the training sets $\{D_1, \dots, D_n\}$ and for all hyperposteriors $\mathcal{Q}$

\begin{align}
\widetilde{\mathcal{R}}(\mathcal{Q}) &\geq \widehat{\mathcal{R}}_{\tau}(\mathcal{Q}) -\frac{1}{n\lambda}D_{\mathrm{KL}}(\mathcal{Q}||\mathcal{P}) - \frac{1}{nm\lambda}\mathop{\mathbb{E}}_{P \sim \mathcal{Q}}\left[\sum_{i=1}^{n}\sum_{j=1}^{m}D_{\mathrm{KL}}(A(D_{i}^{:j-1}, P)||P)\right]\nonumber\\
&- \frac{\lambda(1 + \tau)^2}{8m} - \frac{1}{n\lambda}\mathrm{ln}(m/\delta).\label{eqn:multi_lower}
\end{align}
\end{lemma}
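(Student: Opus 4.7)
The plan is to mirror the proof of Lemma \ref{lem:multi_risk_bern} almost step-for-step, substituting the Hoeffding--Azuma style bound (Lemma \ref{lem:azuma}) for the PAC-Bayesian Bernstein inequality (Lemma \ref{lem:bern}). The supermartingale $M_{im}^{\tau}(a) = \sum_{k=1}^{m} X_{ik}^{\tau}(a)$ with the clipped increments from Equation (\ref{eqn:supermax}) has already been shown to be a supermartingale difference sequence, with each $X_{ik}^{\tau}(a)$ lying in an interval of length $1+\tau$. I will first apply the compression lemma with the joint distributions $(\mathcal{Q}, A_{k-1}^n)$ and $(\mathcal{P}, P^n)$ as in Equation (\ref{eqn:lem2:compressb}), but replacing the Bernstein variance correction by nothing (since Azuma gives a closed exponential moment bound directly). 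This yields, for each fixed $k \in \{1, \dots, m\}$ and any $\lambda > 0$,
\begin{equation*}
\mathop{\mathbb{E}}_{P \sim \mathcal{Q}, a_i \sim A(D_i^{:k-1}, P)}\left[\sum_{i=1}^{n} M_{im}^{\tau}(a_i)\right] \leq \frac{1}{\lambda} D_{\mathrm{KL}}((\mathcal{Q}, A_{k-1}^n) \| (\mathcal{P}, P^n)) + \frac{1}{\lambda}\ln\left(\mathop{\mathbb{E}}_{P \sim \mathcal{P}, a_i \sim P}\left[e^{\lambda \sum_{i=1}^{n} M_{im}^{\tau}(a_i)}\right]\right).
\end{equation*}

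Next I would bound the exponential moment on the right. Markov's inequality over the random draws $D_1, \dots, D_n$ loses a factor $1/\delta$, and since $\mathcal{P}$ is chosen prior to seeing any task data, the expectations over $P$ and over the data can be swapped. The key step is then to peel off the innermost factor using the tower property: conditioned on $D_1, \dots, D_{n-1}$, the increments $X_{nk}^{\tau}(a_n)$ form a supermartingale difference sequence with $m$ terms, each of range $1+\tau$, so Lemma \ref{lem:azuma} gives
\begin{equation*}
\mathop{\mathbb{E}}_{D_n}\condexp*{e^{\lambda M_{nm}^{\tau}(a_n)}}{D_1, \dots, D_{n-1}} \leq e^{\lambda^2 m (1+\tau)^2 / 8}.
\end{equation*}
Iterating this peeling over $i = n, n-1, \dots, 1$ produces the overall bound $e^{\lambda^2 n m (1+\tau)^2 / 8}$ on the joint exponential moment.

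Plugging this back into the compression-lemma inequality, taking a union bound over $k = 1, \dots, m$ (replacing $\ln(1/\delta)$ by $\ln(m/\delta)$), and using the identity
\begin{equation*}
\widehat{\mathcal{R}}_{\tau}(\mathcal{Q}) - \widetilde{\mathcal{R}}(\mathcal{Q}) = \frac{1}{m}\sum_{k=1}^{m}\frac{1}{nm}\mathop{\mathbb{E}}_{P \sim \mathcal{Q}, a_i \sim A(D_i^{:k-1}, P)}\left[\sum_{i=1}^{n} M_{im}^{\tau}(a_i)\right]
\end{equation*}
(analogous to the decomposition used in Lemma \ref{lem:multi_risk_bern}), I obtain an upper bound on $\widehat{\mathcal{R}}_{\tau}(\mathcal{Q}) - \widetilde{\mathcal{R}}(\mathcal{Q})$. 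Expanding the joint KL via Equation (\ref{eqn:kl}) splits it into $D_{\mathrm{KL}}(\mathcal{Q}\|\mathcal{P})$ and the double sum of $D_{\mathrm{KL}}(A(D_i^{:j-1}, P)\|P)$ terms. Finally, the reparameterisation $\lambda' = m\lambda$ absorbs the extra factor of $m$ in the denominator and reproduces (\ref{eqn:multi_lower}) exactly.

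The only part requiring real care is verifying that the exponential-moment peeling is valid at each level: the $i$th factor $e^{\lambda M_{im}^{\tau}(a_i)}$ depends on $D_i$ through $M_{im}^{\tau}$, and on previous data only through the fixed action $a_i$ sampled from $P$ (not from the learner), so conditioning on $D_1, \dots, D_{i-1}$ and an action $a_i$ drawn from the hyperprior-sampled $P$ leaves $\{X_{ik}^{\tau}(a_i)\}_{k=1}^{m}$ a bona fide supermartingale difference sequence with respect to the filtration generated by $D_i^{:k-1}$. This is the main structural obstacle; once it is nailed down, the rest is a mechanical translation of the Bernstein argument with $(e-2)/b_{\min}$ replaced by $(1+\tau)^2/8$ and without the need to bound a cumulative variance, which is also why the restriction $\lambda \in [0, m b_{\min}]$ drops to the unrestricted $\lambda > 0$ in the statement.
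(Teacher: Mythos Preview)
Your proposal is correct and follows the paper's own proof essentially step for step: compression lemma on the joint distributions $(\mathcal{Q},A_{k-1}^n)$ and $(\mathcal{P},P^n)$, Markov's inequality over $D_1,\dots,D_n$, swap of expectations, iterated peeling via Lemma~\ref{lem:azuma} to get the factor $e^{\lambda^2 nm(1+\tau)^2/8}$, a union bound over $k$, the decomposition of $\widehat{\mathcal{R}}_\tau(\mathcal{Q})-\widetilde{\mathcal{R}}(\mathcal{Q})$, the KL decomposition from Equation~(\ref{eqn:kl}), and the final rescaling of $\lambda$. Your phrasing that each $X_{ik}^{\tau}(a)$ lies in an interval of \emph{length} $1+\tau$ is in fact slightly more accurate than the paper's stated range $[0,1+\tau]$, and your substitution $\lambda'=m\lambda$ is the one that actually recovers (\ref{eqn:multi_lower}).
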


Now we are ready to state two lower bounds on the marginal transfer reward, which are our main results. First, we can combine the results from Lemma \ref{lem:transfer_bound} and from Lemma \ref{lem:multi_risk_bern} to obtain our first lower bound.

\begin{theorem}
\label{thm:big_bound_bern}

If condition (\ref{eqn:const_exp_assump}) is satisfied, then for any hyperprior $\mathcal{P}$, any $\lambda_1 > 0$, any $\lambda_2 \in [0, mb_{\mathrm{min}}]$ and any $\delta \in (0, 1]$, inequality (\ref{eqn:main_bern_lower}) holds with probability at least $1 - \delta$ over the tasks and their training sets $(T_1, D_1), \dots, (T_n, D_n)$ and for all hyperposteriors $\mathcal{Q}$

\begin{align}
\label{eqn:main_bern_lower}
\mathcal{R}(\mathcal{Q}) &\geq \widehat{\mathcal{R}}(\mathcal{Q}) - \left(\frac{1}{\lambda_1} + \frac{1}{n\lambda_2}\right)D_{\mathrm{KL}}(\mathcal{Q}||\mathcal{P})\\
&- \frac{1}{nm\lambda_2}\mathop{\mathbb{E}}_{P \sim \mathcal{Q}}\left[\sum_{i=1}^{n}\sum_{j=1}^{m}D_{\mathrm{KL}}(A(D_{i}^{:j-1}, P)||P)\right]\nonumber\\
&- c_n - \frac{\lambda_1}{8n} - \frac{\lambda_2(e - 2)}{b_{\mathrm{min}} m} - \frac{1}{\lambda_1}\mathrm{ln}(2/\delta) - \frac{1}{n\lambda_2}\mathrm{ln}(2m/\delta)\nonumber,
\end{align}

where $c_n$ is the same as in Lemma \ref{lem:transfer_bound}.
\end{theorem}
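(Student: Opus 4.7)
The plan is to chain Lemma \ref{lem:transfer_bound} and Lemma \ref{lem:multi_risk_bern} via a union bound over their respective failure events. Both lemmas produce high-probability lower bounds over the same underlying randomness (the tasks and their training sets), so combining them is a matter of accounting.

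First, I would instantiate Lemma \ref{lem:transfer_bound} with confidence parameter $\delta/2$ and free parameter $\lambda_1 > 0$. This yields, with probability at least $1 - \delta/2$ over $(T_1, D_1), \dots, (T_n, D_n)$ and simultaneously for all hyperposteriors $\mathcal{Q}$,
\begin{equation*}
\mathcal{R}(\mathcal{Q}) \geq \widetilde{\mathcal{R}}(\mathcal{Q}) - c_n - \frac{1}{\lambda_1}D_{\mathrm{KL}}(\mathcal{Q}||\mathcal{P}) - \frac{\lambda_1}{8n} - \frac{1}{\lambda_1}\mathrm{ln}(2/\delta).
\end{equation*}
Next, I would instantiate Lemma \ref{lem:multi_risk_bern} with confidence parameter $\delta/2$ and free parameter $\lambda_2 \in [0, m b_{\mathrm{min}}]$, obtaining, with probability at least $1 - \delta/2$ over the training sets and simultaneously for all $\mathcal{Q}$,
\begin{equation*}
\widetilde{\mathcal{R}}(\mathcal{Q}) \geq \widehat{\mathcal{R}}(\mathcal{Q}) - \frac{1}{n\lambda_2}D_{\mathrm{KL}}(\mathcal{Q}||\mathcal{P}) - \frac{1}{nm\lambda_2}\mathop{\mathbb{E}}_{P \sim \mathcal{Q}}\!\left[\sum_{i=1}^{n}\sum_{j=1}^{m}D_{\mathrm{KL}}(A(D_{i}^{:j-1}, P)||P)\right] - \frac{\lambda_2(e-2)}{b_{\mathrm{min}} m} - \frac{1}{n\lambda_2}\mathrm{ln}(2m/\delta).
\end{equation*}

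Applying a union bound to the two failure events, both inequalities hold simultaneously with probability at least $1 - \delta$ over $(T_1, D_1), \dots, (T_n, D_n)$. Chaining them (substituting the second inequality into the first via the intermediate quantity $\widetilde{\mathcal{R}}(\mathcal{Q})$) and collecting the $D_{\mathrm{KL}}(\mathcal{Q}||\mathcal{P})$ terms gives precisely inequality (\ref{eqn:main_bern_lower}).

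There is no real obstacle here: the two lemmas have already done the heavy lifting (the change-of-measure step via the compression lemma, the martingale control via Hoeffding's lemma for one and Bernstein's lemma for the other, and the Markov-plus-factorisation argument for shifting from sample to expectation). The only things to verify are (i) that both lemmas quantify probability over the same source of randomness so that a single union bound with weights $\delta/2 + \delta/2 = \delta$ suffices, which is the case since $\{D_1, \dots, D_n\}$ is a function of $(T_1, D_1), \dots, (T_n, D_n)$, and (ii) that the parameter constraint $\lambda_2 \in [0, m b_{\mathrm{min}}]$ inherited from Lemma \ref{lem:multi_risk_bern} carries over unchanged, while $\lambda_1 > 0$ is unrestricted as in Lemma \ref{lem:transfer_bound}. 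The assumption (\ref{eqn:const_exp_assump}) is needed only by Lemma \ref{lem:transfer_bound}, so it is the only hypothesis that must be carried through to the theorem statement.
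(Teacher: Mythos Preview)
Your proposal is correct and matches the paper's own proof essentially line for line: instantiate Lemma \ref{lem:transfer_bound} and Lemma \ref{lem:multi_risk_bern} each at confidence $\delta/2$, apply a union bound so both hold with probability at least $1-\delta$, and chain through $\widetilde{\mathcal{R}}(\mathcal{Q})$ to obtain (\ref{eqn:main_bern_lower}). Your remarks on the parameter constraints and the role of assumption (\ref{eqn:const_exp_assump}) are also accurate.
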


\begin{proof}
By Lemma \ref{lem:transfer_bound}, for any hyperprior $\mathcal{P}$, $\lambda_1 > 0$ and any $\delta_1 \in (0, 1]$, the following inequality holds with probability at least $1 - \delta_1$

\begin{equation*}
\mathcal{R}(\mathcal{Q}) \geq \widetilde{\mathcal{R}}(\mathcal{Q}) - \frac{1}{\lambda_1}D_{\mathrm{KL}}(\mathcal{Q}||\mathcal{P}) - c_n - \frac{\lambda_1}{8n} - \frac{1}{\lambda_1}\mathrm{ln}(1/\delta_1).
\end{equation*}

By Lemma \ref{lem:multi_risk_bern}, for any hyperprior $\mathcal{P}$, any $\lambda_2 \in [0, mb_{\mathrm{min}}]$ and any $\delta_2 \in (0, 1]$, the following inequality holds with probability at least $1 - \delta_2$

\begin{align*}
\widetilde{\mathcal{R}}(\mathcal{Q}) &\geq \widehat{\mathcal{R}}(\mathcal{Q}) - \frac{1}{n\lambda_2}D_{\mathrm{KL}}(\mathcal{Q}||\mathcal{P}) - \frac{1}{nm\lambda_2}\mathop{\mathbb{E}}_{P \sim \mathcal{Q}}\left[\sum_{i=1}^{n}\sum_{j=1}^{m}D_{\mathrm{KL}}(A(D_i^{:j}, P)||P)\right]\\
&- \frac{\lambda_2(e - 2)}{b_{\mathrm{min}} m} - \frac{1}{n\lambda_2}\mathrm{ln}(m/\delta_2)
\end{align*}

By the union bound, the probability that both inequalities hold simultaneously is at least $1 - \delta_1 - \delta_2$. Therefore, if we set $\delta_1 = \delta_2 = \delta/2$, we have that with probability at least $1 - \delta$

\begin{align*}
\mathcal{R}(\mathcal{Q}) &\geq \widehat{\mathcal{R}}(\mathcal{Q}) - \left(\frac{1}{\lambda_1} + \frac{1}{n\lambda_2}\right)D_{\mathrm{KL}}(\mathcal{Q}||\mathcal{P})\\
&- \frac{1}{nm\lambda_2}\mathop{\mathbb{E}}_{P \sim \mathcal{Q}}\left[\sum_{i=1}^{n}\sum_{j=1}^{m}D_{\mathrm{KL}}(A(D_{i}^{:j-1}, P)||P)\right]\\
&- c_n - \frac{\lambda_1}{8n} - \frac{\lambda_2(e - 2)}{b_{\mathrm{min}} m} - \frac{1}{\lambda_1}\mathrm{ln}(2/\delta) - \frac{1}{n\lambda_2}\mathrm{ln}(2m/\delta).
\end{align*}
\end{proof}

Following the same steps, we can combine the results from Lemma \ref{lem:transfer_bound} and from Lemma \ref{lem:multi_risk_trunc} to obtain a second lower bound.

\begin{theorem}
\label{thm:big_bound_trunc}

If condition (\ref{eqn:const_exp_assump}) is satisfied, then for any hyperprior $\mathcal{P}$, any $\lambda_1 > 0$, any $\lambda_2 > 0$, any $\tau > 0$ and any $\delta \in (0, 1]$, inequality (\ref{eqn:main_trunc_lower}) holds with probability at least $1 - \delta$ over the tasks and their training sets $(T_1, D_1), \dots, (T_n, D_n)$ and for all hyperposteriors $\mathcal{Q}$

\begin{align}
\label{eqn:main_trunc_lower}
\mathcal{R}(\mathcal{Q}) &\geq \widehat{\mathcal{R}}_{\tau}(\mathcal{Q}) - \left(\frac{1}{\lambda_1} + \frac{1}{n\lambda_2}\right)D_{\mathrm{KL}}(\mathcal{Q}||\mathcal{P})\\
&- \frac{1}{nm\lambda_2}\mathop{\mathbb{E}}_{P \sim \mathcal{Q}}\left[\sum_{i=1}^{n}\sum_{j=1}^{m}D_{\mathrm{KL}}(A(D_{i}^{:j-1}, P)||P)\right]\nonumber\\
&- c_n - \frac{\lambda_1}{8n} - \frac{\lambda_2(1+\tau)^2}{8m} - \frac{1}{\lambda_1}\mathrm{ln}(2/\delta) - \frac{1}{n\lambda_2}\mathrm{ln}(2m/\delta)\nonumber,
\end{align}

where $c_n$ is the same as in Lemma \ref{lem:transfer_bound}.
\end{theorem}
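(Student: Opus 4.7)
The proof is essentially identical in structure to that of Theorem \ref{thm:big_bound_bern}: it chains together the two previously-established lemmas via a union bound. So my plan is to keep the skeleton of the earlier proof and only swap in the truncated-reward lemma.

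First, I would invoke Lemma \ref{lem:transfer_bound} with confidence parameter $\delta_1 \in (0,1]$ and free parameter $\lambda_1 > 0$, which, under assumption (\ref{eqn:const_exp_assump}), yields with probability at least $1 - \delta_1$ (over $(T_1,D_1),\dots,(T_n,D_n)$) the lower bound
\begin{equation*}
\mathcal{R}(\mathcal{Q}) \geq \widetilde{\mathcal{R}}(\mathcal{Q}) - c_n - \frac{1}{\lambda_1}\left(D_{\mathrm{KL}}(\mathcal{Q}\|\mathcal{P}) + \frac{\lambda_1^2}{8n} + \ln(1/\delta_1)\right),
\end{equation*}
valid simultaneously for all hyperposteriors $\mathcal{Q}$.

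Second, I would invoke Lemma \ref{lem:multi_risk_trunc} with confidence $\delta_2 \in (0,1]$, free parameter $\lambda_2 > 0$, and clipping level $\tau > 0$, giving with probability at least $1 - \delta_2$ (over $D_1,\dots,D_n$) and for all $\mathcal{Q}$
\begin{align*}
\widetilde{\mathcal{R}}(\mathcal{Q}) &\geq \widehat{\mathcal{R}}_\tau(\mathcal{Q}) - \frac{1}{n\lambda_2}D_{\mathrm{KL}}(\mathcal{Q}\|\mathcal{P}) - \frac{1}{nm\lambda_2}\mathop{\mathbb{E}}_{P\sim\mathcal{Q}}\left[\sum_{i=1}^n\sum_{j=1}^m D_{\mathrm{KL}}(A(D_i^{:j-1},P)\|P)\right]\\
&\quad - \frac{\lambda_2(1+\tau)^2}{8m} - \frac{1}{n\lambda_2}\ln(m/\delta_2).
\end{align*}

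Third, I would apply the union bound: the joint event on which both inequalities hold has probability at least $1 - \delta_1 - \delta_2$. Setting $\delta_1 = \delta_2 = \delta/2$ and substituting the second bound into the first (the substitution is legitimate because the second bound is a deterministic lower bound on $\widetilde{\mathcal{R}}(\mathcal{Q})$ on the good event), I collect the KL terms into the factor $(1/\lambda_1 + 1/(n\lambda_2))$ multiplying $D_{\mathrm{KL}}(\mathcal{Q}\|\mathcal{P})$, keep the per-task KL term from Lemma \ref{lem:multi_risk_trunc}, and gather the log-confidence terms as $\ln(2/\delta)/\lambda_1$ and $\ln(2m/\delta)/(n\lambda_2)$. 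This reproduces exactly (\ref{eqn:main_trunc_lower}).

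There is essentially no obstacle here; the only subtleties are cosmetic. One is ensuring that the probability spaces match so that a single union bound applies: Lemma \ref{lem:transfer_bound} is over the joint distribution of $(T_1,D_1),\dots,(T_n,D_n)$ while Lemma \ref{lem:multi_risk_trunc} is stated over $D_1,\dots,D_n$, but since the latter holds for every realization of the tasks it also holds on the joint space, so the union bound is valid. The other is verifying the parameter ranges: Lemma \ref{lem:multi_risk_trunc} requires only $\lambda_2 > 0$ (unlike the Bernstein variant which required $\lambda_2 \in [0,mb_{\min}]$), which is why the statement of Theorem \ref{thm:big_bound_trunc} likewise allows all $\lambda_2 > 0$. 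Everything else is arithmetic.
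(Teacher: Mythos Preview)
Your proposal is correct and follows exactly the paper's own argument: invoke Lemma~\ref{lem:transfer_bound} with $(\lambda_1,\delta_1)$, invoke Lemma~\ref{lem:multi_risk_trunc} with $(\lambda_2,\tau,\delta_2)$, take a union bound, and set $\delta_1=\delta_2=\delta/2$. Your remarks about the probability spaces lining up and the unrestricted range of $\lambda_2$ are accurate and even slightly more explicit than what the paper writes.
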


The proof of Theorem \ref{thm:big_bound_trunc} can be found in the appendix. The technique of bounding the difference between $\mathcal{R}(\mathcal{Q})$ and $\widehat{\mathcal{R}}_{\tau}(\mathcal{Q})$ by adding bounds on the difference between $\mathcal{R}(\mathcal{Q})$ and $\widetilde{\mathcal{R}}(\mathcal{Q})$ and the difference between $\widetilde{\mathcal{R}}(\mathcal{Q})$ and $\widehat{\mathcal{R}}_{\tau}(\mathcal{Q})$ is borrowed from \cite{pentina2014pac}. However, the two bounds being added together are both novel.

The lower bounds in Theorem \ref{thm:big_bound_bern} and Theorem \ref{thm:big_bound_trunc} contain two complexity penalties. The first is the KL divergence between the hyperposterior and the hyperprior. This penalty is small when the hyperposterior is close to the hyperprior. The second is the expected average KL divergence between the posteriors returned by the base learning algorithm and the priors. This penalty is small when the hyperposterior assigns high probability density to priors that are close to the posteriors returned by the base learning algorithm.

The asymptotic behaviour of the bounds depends on the values of $\lambda_1$ and $\lambda_2$. If the assumption in Equation (\ref{eqn:cond_exp_assump}) is satisfied, then $c_n$ decays to $0$ as $n \to \infty$ with high probability. If we ignore this term, then the difference between the marginal transfer reward and the (clipped) empirical multi-task reward is of order $\mathcal{O}(\frac{1}{\lambda_1} + \frac{1}{n\lambda_2} + \frac{1}{\lambda_2} + \frac{\lambda_1}{n} + \frac{\lambda_2}{m} + \frac{\mathrm{ln}(m)}{n\lambda_2})$. If we set $\lambda_1 = \mathcal{O}(\sqrt{n})$ and $\lambda_2 = \mathcal{O}(\sqrt{m})$, then the difference is of order $\mathcal{O}(\frac{1}{\sqrt{n}} + \frac{1}{n\sqrt{m}} + \frac{1}{\sqrt{m}} + \frac{\mathrm{ln}(m)}{n\sqrt{m}})$. Hence, for this choice of $\lambda_1$ and $\lambda_2$, the difference between the marginal transfer reward and the empirical multi-task reward decays to $0$ as $n \to \infty$ and $m \to \infty$.

\section{Algorithms For Lifelong Multi-Armed Bandits}
\label{sec:alg}

We propose lifelong learning algorithms that use the lower bounds in Theorem \ref{thm:big_bound_bern} and Theorem \ref{thm:big_bound_trunc} as their objective functions. These algorithms use either variational inference (VI) or Markov chain Monte Carlo (MCMC) to approximate the hyperposterior that maximises the lower bound. First, we will describe the characteristics that the algorithms all share.

Each prior $P$ and posterior $Q$ is a probability vector with $K = |\mathcal{A}|$ elements. The $a$th element is the probability of selecting action $a$. We restrict priors to be the softmax of a weight vector $\bs{w} \in \mathbb{R}^{K}$. This means that the hyperprior and hyperposterior can be any distribution over $\mathbb{R}^{K}$. Let $P_{\bs{w}}$ denote the prior probability vector that is softmax of $\bs{w}$. The choice of base learning algorithm is somewhat arbitrary. The only requirements are that it should take a prior $P$ over actions and a dataset $D$ as inputs and return a posterior $Q$ over actions. Since we will need to compute the posterior returned by the base learning algorithm many times when evaluating the lower bound, it is preferable for the posterior returned by the base learning algorithm to have a closed-form solution. With these points in mind, we use the following base learning algorithm:

\begin{equation*}
A(D_i, P) = \argmax_{Q}\left\{\sum_{a \in \mathcal{A}}Q(a)\frac{1}{|I_a|}\sum_{j \in I_a}r_{ij} - \frac{K}{\sqrt{m}}D_{\mathrm{KL}}(Q||P)\right\}.
\end{equation*}

$I_a$ is the set of indices where $a_{ij} = a$. The posterior returned by this base learning algorithm has the following closed-form solution:

\begin{equation*}
Q(a) = \frac{P(a)\mathrm{exp}\left(\frac{\sqrt{m}}{K|I_a|}\sum_{j \in I_a}r_{ij}\right)}{\sum_{a^{\prime} \in \mathcal{A}}P(a^{\prime})\mathrm{exp}\left(\frac{\sqrt{m}}{K|I_a^{\prime}|}\sum_{j \in I_a^{\prime}}r_{ij}\right)}
\end{equation*}

Each behaviour policy can be set equal to the current posterior $Q$. Since the Bernstein bound in Theorem \ref{thm:big_bound_bern} depends on $b_{\mathrm{min}} = \min_{i\leq n, j\leq m, a \in \mathcal{A}}(b_{ij}(a))$, we instead set the behaviour policy to be an $\epsilon$-soft version of the current posterior when using this bound. That is, with probability $1 - \epsilon$, actions are sampled from $Q$, and with probability $\epsilon$, actions are sampled uniformly at random. This guarantees that $b_{\mathrm{min}} \geq \epsilon/K$, where $K$ is the number of actions. Therefore, $b_{\mathrm{min}}$ in Theorem \ref{thm:big_bound_bern} can be replaced with $\epsilon/K$.

We set $\lambda_1 = T_1\sqrt{n}$ and $\lambda_2 = T_2\sqrt{m}$, where $T_1$ and $T_2$ are positive temperature parameters. Since $\lambda_1$ and $\lambda_2$ must be chosen in advance and cannot depend on any observed data for our bounds to be valid, $T_1$ and $T_2$ must also be chosen in advance. Optionally, one can use the bound value to inform the choice of the temperature parameters. First, a grid of (say $N$) $T_1$ and $T_2$ values can be constructed. Then, using a union bound argument, if we replace $\delta$ with $\delta/N$, then either of our lower bounds holds simultaneously for each $T_1$ and $T_2$ with probability greater than $1 - \delta$. Then one can use the lower bound with whichever pair of $T_1$ and $T_2$ maximises its value as an objective function. This approach could result in nearly the best possible bound value, but will not necessarily result in the best reward obtained. In our experiments, we run our algorithms several times with different fixed values of $T_1$ and $T_2$, and we find that the values of $T_1$ and $T_2$ that give the best bound values do not give the best performance in terms of reward.

Both our proposed lifelong learning algorithms approximate the hyperposterior that maximises one of our lower bounds. If we are using the Bernstein bound in Theorem \ref{thm:big_bound_bern}, this hyperposterior is the solution of the following optimisation problem:

\begin{align}
\argmax_{\mathcal{Q}}\bigg\{&\widehat{\mathcal{R}}(\mathcal{Q}) - \frac{1}{T_2nm\sqrt{m}}\mathop{\mathbb{E}}_{\bs{w} \sim \mathcal{Q}}\left[\sum_{i=1}^{n}\sum_{j=1}^{m}D_{\mathrm{KL}}(A(D_{i}^{:j-1}, P_{\bs{w}})||P_{\bs{w}})\right]\nonumber\\
&- \frac{T_1\sqrt{n} + T_2n\sqrt{m}}{T_1T_2n\sqrt{nm}}D_{\mathrm{KL}}(\mathcal{Q}||\mathcal{P})\bigg\}.\label{eqn:opt_hyperpost_lambda}
\end{align}

\subsection{Variational Inference}

Here, we describe an algorithm that approximates the solution of the optimisation problem in Equation (\ref{eqn:opt_hyperpost_lambda}) with variational inference techniques. We describe the algorithm using the Bernstein bound as an example, but the method works in the same way with the clipping bound from Theorem \ref{thm:big_bound_trunc}. We instead solve a slightly different optimisation problem:

\begin{align}
\argmax_{\theta}\bigg\{&\widehat{\mathcal{R}}(\mathcal{Q}_{\theta}) - \frac{1}{T_2nm\sqrt{m}}\mathop{\mathbb{E}}_{\bs{w} \sim \mathcal{Q}_{\theta}}\left[\sum_{i=1}^{n}\sum_{j=1}^{m}D_{\mathrm{KL}}(A(D_{i}^{:j-1}, P_{\bs{w}})||P_{\bs{w}})\right]\nonumber\\
&- \frac{T_1\sqrt{n} + T_2n\sqrt{m}}{T_1T_2n\sqrt{nm}}D_{\mathrm{KL}}(\mathcal{Q}_{\theta}||\mathcal{P})\bigg\}.\label{eqn:vi_opt}
\end{align}

$\mathcal{Q}_{\theta}$ is a member of a parametric family of distributions with parameter $\theta$. We choose $\mathcal{Q}_{\theta} = \mathcal{N}(\bs{\mu}_{\mathcal{Q}}, \bs{\sigma}_{\mathcal{Q}}I)$, so $\theta = (\bs{\mu}_{\mathcal{Q}}, \bs{\sigma}_{\mathcal{Q}})$, and $\mathcal{P} = \mathcal{N}(\bs{\mu}_{\mathcal{P}}, \bs{\sigma}_{\mathcal{P}}I)$. Our goal is now to find the optimal $\theta$.

\begin{algorithm}[H]
\caption{PAC-Bayes VI}
\label{alg:pac_bayes_vi}
\begin{algorithmic}
\STATE {\bfseries Input:} Task distribution $\mathcal{T}$, base learning algorithm $A$, hyperprior $\mathcal{P}$, temperature parameters $T_1, T_2$
\STATE Initialise hyperposterior $\mathcal{Q}_{\theta} \gets \mathcal{P}$
\FOR{$i=1$ {\bfseries to} $n$}
\STATE Sample task $T_i \sim \mathcal{T}$
\STATE Sample prior $P_{\bs{w}} \sim \mathcal{Q}_{\theta}$
\STATE Initialise task posterior $Q \gets P_{\bs{w}}$
\STATE Initialise task dataset $D_i \gets ()$
\FOR{$j=1$ {\bfseries to} $m$}
\STATE Get behaviour policy $b_{ij} \gets B(Q, \epsilon)$
\STATE Sample $a_{ij} \sim b_{ij}(a)$ and $r_{ij} \sim \rho_{i}(r|a_{ij})$
\STATE Append $(a_{ij}, r_{ij})$ to $D_i$
\STATE Update task posterior $Q \gets A(D_i, P_{\bs{w}})$
\ENDFOR
\FOR{$k$ iterations}
\STATE $\theta \gets \theta + \eta\pp{}{}{\theta}L_{\mathrm{VI}}(\theta)$ (or Adam)
\ENDFOR
\ENDFOR
\end{algorithmic}
\end{algorithm}

We use Monte Carlo approximations of the expected values in Equation (\ref{eqn:vi_opt}). In particular,

\begin{align*}
\widehat{\mathcal{R}}(\mathcal{Q}_{\theta}) &= \mathop{\mathbb{E}}_{\bs{w} \sim \mathcal{Q}_{\theta}}\left[\frac{1}{n}\sum_{i=1}^{n}\frac{1}{m}\sum_{j=1}^{m}\widehat{R}_i(A(D_i^{:j-1}, P_{\bs{w}}), D_i)\right]\\
&\approx \frac{1}{n}\sum_{i=1}^{n}\frac{1}{m}\sum_{j=1}^{m}\widehat{R}_i(A(D_i^{:j-1}, P_{\bs{w}_{ij}}), D_i),
\end{align*}

and

\begin{equation*}
\mathop{\mathbb{E}}_{\bs{w} \sim \mathcal{Q}_{\theta}}\left[\sum_{i=1}^{n}\sum_{j=1}^{m}D_{\mathrm{KL}}(A(D_{i}^{:j-1}, P_{\bs{w}})||P_{\bs{w}})\right] \approx \sum_{i=1}^{n}\sum_{j=1}^{m}D_{\mathrm{KL}}(A(D_{i}^{:j-1}, P_{\bs{w}_{ij}})||P_{\bs{w}_{ij}}),
\end{equation*}

where each $\bs{w}_{ij}$ is an independent sample from $\mathcal{Q}_{\theta}$. With these Monte Carlo approximations, the objective function used in the algorithm is:

\begin{align*}
L_{\mathrm{VI}}(\theta) &= \frac{1}{n}\sum_{i=1}^{n}\frac{1}{m}\sum_{j=1}^{m}\widehat{R}_i(A(D_i^{:j-1}, P_{\bs{w}_{ij}}), D_i)\\
&- \frac{1}{T_2nm\sqrt{m}}\sum_{i=1}^{n}\sum_{j=1}^{m}D_{\mathrm{KL}}(A(D_{i}^{:j-1}, P_{\bs{w}_{ij}})||P_{\bs{w}_{ij}})\\
&- \frac{T_1\sqrt{n} + T_2n\sqrt{m}}{T_1T_2n\sqrt{nm}}D_{\mathrm{KL}}(\mathcal{Q}_{\theta}||\mathcal{P})
\end{align*}

We use Adam \cite{kingma2014adam} to maximise the lower bound with respect to $\theta$. Gradients of the Monte Carlo approximations with respect to $\theta$ are computed using the reparameterisation trick \cite{kingma2013auto}. We call this algorithm PAC-Bayes VI. Algorithm \ref{alg:pac_bayes_vi} provides pseudocode for PAC-Bayes VI. In Algorithm \ref{alg:pac_bayes_vi}, $B(Q, \epsilon)$ denotes the $\epsilon$-soft version of $Q$. Note that if the clipping bound is used, then there is no need to use an $\epsilon$-soft version of $Q$ as a behaviour policy.

Since $D_{\mathrm{KL}}(\mathcal{Q}_{\theta}||\mathcal{P})$ is available in closed-form, the value of the lower bound at $\mathcal{Q}_{\theta}$ can be easily computed if we assume that $c_n = 0$. 

\subsection{Markov Chain Monte Carlo}

Here, we describe an algorithm that uses Markov Chain Monte Carlo (MCMC) techniques to draw samples from the hyperposterior that is solution to the optimisation problem in Equation (\ref{eqn:opt_hyperpost_lambda}). Firstly, define

\begin{align*}
\phi(\bs{w}) &= \frac{T_1T_2n\sqrt{nm}}{T_1\sqrt{n} + T_2n\sqrt{m}}\frac{1}{n}\sum_{i=1}^{n}\frac{1}{m}\sum_{j=1}^{m}\widehat{R}_i(A(D_i^{:j-1}, P_{\bs{w}}), D_i)\\
&- \frac{T_1T_2n\sqrt{nm}}{T_1\sqrt{n} + T_2n\sqrt{m}}\frac{1}{T_2nm\sqrt{m}}\sum_{i=1}^{n}\sum_{j=1}^{m}D_{\mathrm{KL}}(A(D_{i}^{:j}, P_{\bs{w}})||P_{\bs{w}}).
\end{align*}

We have that

\begin{align*}
\mathop{\mathbb{E}}_{\bs{w} \sim \mathcal{Q}}\left[\phi(\bs{w})\right] &= \frac{T_1T_2n\sqrt{nm}}{T_1\sqrt{n} + T_2n\sqrt{m}}\widehat{\mathcal{R}}(\mathcal{Q})\\
&- \frac{T_1\sqrt{n}}{m(T_1\sqrt{n} + T_2n\sqrt{m})}\mathop{\mathbb{E}}_{\bs{w} \sim \mathcal{Q}}\left[\sum_{i=1}^{n}\sum_{j=1}^{m}D_{\mathrm{KL}}(A(D_{i}^{:j-1}, P_{\bs{w}})||P_{\bs{w}})\right].
\end{align*}

The optimisation problem in Equation (\ref{eqn:opt_hyperpost_lambda}) can be rewritten as

\begin{equation*}
\argmax_{\mathcal{Q}}\left\{\mathop{\mathbb{E}}_{\bs{w} \sim \mathcal{Q}}\left[\phi(\bs{w})\right] - D_{\mathrm{KL}}(\mathcal{Q}||\mathcal{P})\right\}
\end{equation*}

This type of problem appears frequently in the PAC-Bayesian literature and has previously been studied by \cite{catoni2004statistical}, \cite{guedj2019primer}. It is known that the Gibbs distribution is the solution. In this case, the Gibbs distribution has the probability density function

\begin{equation*}
\mathcal{Q}(\bs{w}) \propto \mathcal{P}(\bs{w})e^{\phi(\bs{w})}.
\end{equation*}

Furthermore, the maximum value attained by the Gibbs distribution is:

\begin{equation}
\label{eqn:gibb_max}
\max_{\mathcal{Q}}\left\{\mathop{\mathbb{E}}_{\bs{w} \sim \mathcal{Q}}\left[\phi(\bs{w})\right] - D_{\mathrm{KL}}(\mathcal{Q}||\mathcal{P})\right\} = \mathrm{ln}\left(\mathop{\mathbb{E}}_{\bs{w} \sim \mathcal{P}}\left[e^{\phi(\bs{w})}\right]\right)
\end{equation}

Though we can only compute the unnormalised density function of the Gibbs distribution, we can still sample it using MCMC. We choose $\mathcal{P} = \mathcal{N}(\bs{\mu}_{\mathcal{P}}, \bs{\sigma}_{\mathcal{P}}I)$ and we use Preconditioned Stochastic Gradient Langevin Dynamics (pSGLD) \cite{li2016preconditioned} to draw samples from $\mathcal{P}(\bs{w})e^{\phi(\bs{w})}$. We call this algorithm PAC-Bayes MCMC. Algorithm \ref{alg:pac_bayes_mcmc} provides pseudocode for PAC-Bayes MCMC.

This time, $D_{\mathrm{KL}}(\mathcal{Q}||\mathcal{P})$ is not available in closed-form. However, we can use equation (\ref{eqn:gibb_max}) to rewrite the Bernstein (or clipping) lower bound as:

\begin{align*}
&\frac{T_1\sqrt{n} + T_2n\sqrt{m}}{T_1T_2n\sqrt{nm}}\mathrm{ln}\left(\mathop{\mathbb{E}}_{\bs{w} \sim \mathcal{P}}\left[e^{\phi(\bs{w})}\right]\right) - c_n - \frac{T_1}{8\sqrt{n}}\\
&- \frac{T_2(e - 2)}{b_{\mathrm{min}} \sqrt{m}} - \frac{1}{T_1\sqrt{n}}\mathrm{ln}(2/\delta) + \frac{1}{T_2n\sqrt{m}}\mathrm{ln}(2m/\delta)
\end{align*}

\begin{algorithm}[H]
\caption{PAC-Bayes MCMC}
\label{alg:pac_bayes_mcmc}
\begin{algorithmic}
\STATE {\bfseries Input:} Task distribution $\mathcal{T}$, base learning algorithm $A$, hyperprior $\mathcal{P}$, temperature parameters $T_1, T_2$
\STATE Initialise prior weight vector $\bs{w} \sim \mathcal{P}$
\FOR{$i=1$ {\bfseries to} $n$}
\STATE Sample task $T_i \sim \mathcal{T}$
\STATE Set prior $P \gets P_{\bs{w}}$
\STATE Initialise task posterior $Q \gets P$
\STATE Initialise task dataset $D_i \gets ()$
\FOR{$j=1$ {\bfseries to} $m$}
\STATE Get behaviour policy $b_{ij} \gets B(Q, \epsilon)$
\STATE Sample $a_{ij} \sim b_{ij}(a)$ and $r_{ij} \sim \rho_{i}(r|a_{ij})$
\STATE Append $(a_{ij}, r_{ij})$ to $D_i$
\STATE Update task posterior $Q \gets A(D_i, P)$
\ENDFOR
\FOR{$k$ iterations}
\STATE $\bs{w} \gets \texttt{pSGLD\_step}(\bs{w}, \mathcal{P}(\bs{w})e^{\phi(\bs{w})})$
\ENDFOR
\ENDFOR
\end{algorithmic}
\end{algorithm}

All that remains is to calculate or approximate $\mathrm{ln}\left(\mathop{\mathbb{E}}_{\bs{w} \sim \mathcal{P}}\left[e^{\phi(\bs{w})}\right]\right)$. We can apply Jensen's inequality to obtain a lower bound that can easily be approximated with a standard Monte Carlo estimate:

\begin{equation*}
\mathrm{ln}\left(\mathop{\mathbb{E}}_{\bs{w} \sim \mathcal{P}}\left[e^{\phi(\bs{w})}\right]\right) \geq \mathop{\mathbb{E}}_{\bs{w} \sim \mathcal{P}}\left[\mathrm{ln}\left(e^{\phi(\bs{w})}\right)\right]
\end{equation*}

As shown in \cite{burda2016importance}, if we replace $e^{\phi(\bs{w})}$ with an average of several samples $\frac{1}{K}\sum_{k=1}^{K}e^{\phi(\bs{w}_k)}$ from $\mathcal{P}$, then the lower bound on $\mathrm{ln}\left(\mathbb{E}_{\bs{w} \sim \mathcal{P}}\left[e^{\phi(\bs{w})}\right]\right)$ approaches the true value as $K$ goes to infinity:

\begin{equation*}
\lim_{K \to \infty}\mathop{\mathbb{E}}_{\bs{w}_1, \dots, \bs{w}_K \sim \mathcal{P}^K}\left[\mathrm{ln}\left(\frac{1}{K}\sum_{k=1}^{K}e^{\phi(\bs{w}_k)}\right)\right] = \mathrm{ln}\left(\mathop{\mathbb{E}}_{\bs{w} \sim \mathcal{P}}\left[e^{\phi(\bs{w})}\right]\right)
\end{equation*}

\section{Experiments}

We tested our proposed lifelong learning algorithms in three environments. In each environment, tasks are multi-armed bandit problems with 10 or 20 actions and binary rewards. The reward distribution for action $a$ is a Bernoulli distribution with parameter $p_a$. The distribution over each $p_a$ is a Beta distribution with parameters $\alpha_a$ and $\beta_a$. Therefore, sampling a new task from the environment means sampling the parameter $p_a$ for each arm from a Beta distribution with parameters $\alpha_a$ and $\beta_a$.

In environment one, the shape parameters of the Beta distributions are as follows:

\begin{equation*}
(\alpha_i, \beta_i) = \left\{\begin{array}{ll} (5.0, 20.0) & \text{if } i \in \{0, 1, \dots, 7\}\\ (20.0, 5.0) & \text{if } i \in \{8, 9\} \end{array}\right..
\end{equation*}

When sampling tasks from this environment, the expected values of $p_0, \dots, p_7$ are all 0.2 and the expected values of $p_8$ and $p_9$ are both 0.8. $p_a$ has small variance for every $a$. Each task sampled from this environment will be very similar, and in almost every task sampled from this environment, there will be $2$ good actions and $8$ bad actions. A good prior for tasks sampled from this environment would assign high probability to actions $8$ and $9$, reducing the problem of choosing between 10 actions to choosing between the two good actions. The second environment is the same as the first environment, except with 20 actions instead of 10 actions. The proportions of good and bad actions is still the same. In environment two, the shape parameters of the Beta distributions are:

\begin{equation*}
(\alpha_i, \beta_i) = \left\{\begin{array}{ll} (5, 20) & \text{if } i \in \{0, 1, \dots, 15\}\\ (20, 5) & \text{if } i \in \{16, 17, 18, 19\} \end{array}\right..
\end{equation*}

The third environment also has 20 actions. The shape parameters of the Beta distributions are:

\begin{equation*}
(\alpha_i, \beta_i) = \left\{\begin{array}{ll} (1, 4) & \text{if } i \in \{0, 1, \dots, 9\}\\
(5, 20) & \text{if } i = 10\\
(\tfrac{20}{3}, \tfrac{55}{3}) & \text{if } i = 11\\
(\tfrac{25}{3}, \tfrac{50}{3}) & \text{if } i = 12\\
\qquad \vdots & \-\\\
(20, 5) & \text{if } i = 19\\
\end{array}\right..
\end{equation*}

\begin{figure}[H]
\centering
\includegraphics[width=0.9\textwidth]{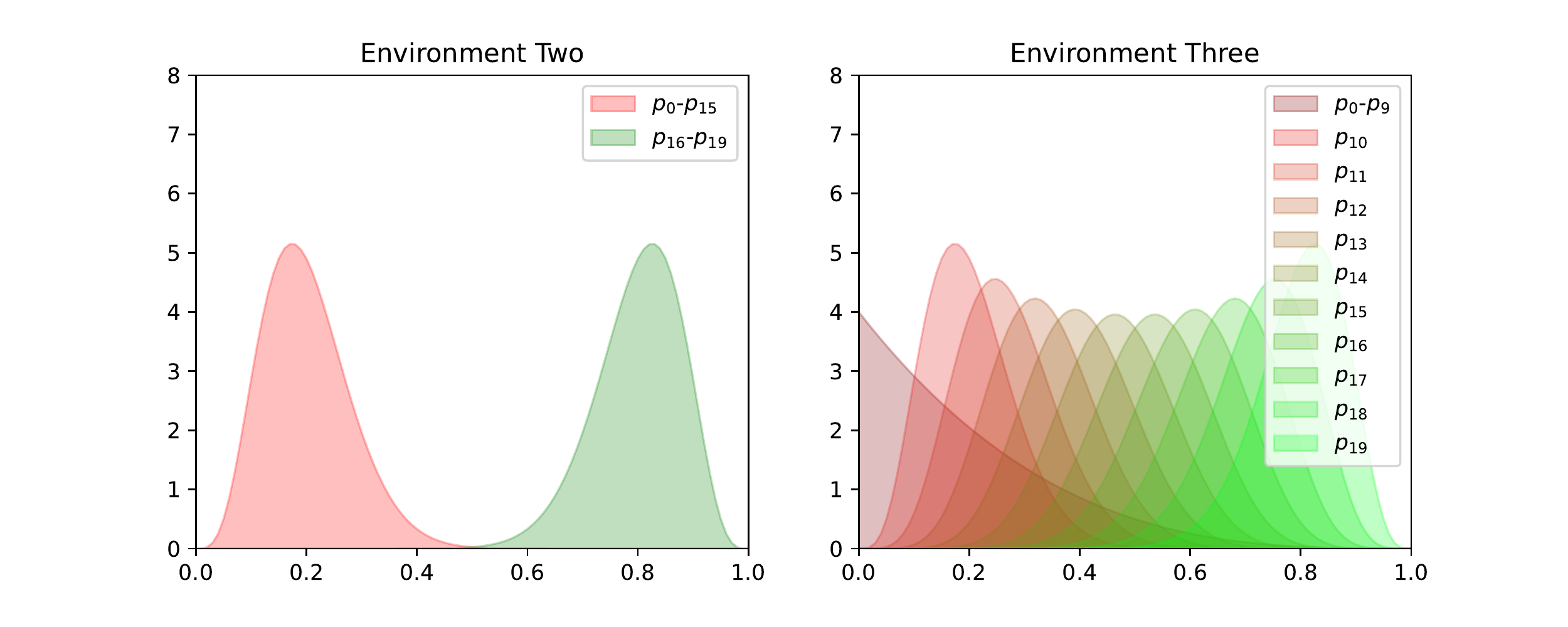}
\caption{The probability density of $p_a$ for each action in environment two (left) and environment three (right).}
\label{fig:envs}
\end{figure}

In environment three, $p_a$ has large variance for the first 10 actions and small variance for the last 10 actions. The expected value of $p_a$ is 0.2 for the first 10 actions and increases linearly from 0.2 to 0.8 for actions 10 to 19. In environment three, it is more difficult to identify which actions tend to have the best rewards and therefore harder to learn a useful prior. Figure \ref{fig:envs} shows the probability density of $p_a$ for each action in environments two and three. The densities for environment one are the same as those of environment two, except with only 10 total actions.

We compared our PAC-Bayes VI and PAC-Bayes MCMC algorithms with both bounds against two benchmark methods. The first benchmark uses the base learning algorithm with a uniform prior for every task. We call this benchmark Learning From Scratch (LFS). The second benchmark is called Adaptive Ridge Regression (ARR) \cite{pentina2014pac}. It uses the base learning algorithm for every task, initially with a uniform prior. After every task, the prior is set to be equal to the average of the posterior distributions learned on all previous tasks. If $p_{n+1}(a)$ is the $a$th element of the prior probability vector for task $n+1$ and $q_i(a)$ is the $a$th element of the posterior probability vector from task $i$, then

\begin{equation*}
p_{n+1}(a) = \frac{1}{n}\sum_{i=1}^{n}q_{i}(a)
\end{equation*}

In each experiment, $m=20$ action-reward pairs were sampled from each task and the number of tasks ranged from $n = 1$ to $100$. We tested our algorithms with temperature constants $(T_1, T_2) \in \{(5, 1), (15, 3),$ $(50, 10)\}$ and reported results with each. For the Bernstein bound, we additionally tested $T_1 = 5$ and $T_2 = (\epsilon/K)\sqrt{m}$, which is the largest possible value for $T_2$ such that the bound is valid. We recorded the average of the $m$ rewards obtained in each task as well as the value of the each lower bound. The reported bound values assume that $c_n = 0$. Returning to our second assumption in Sect.\ \ref{simple}, this means we have assumed that $n$ is large enough for $c_n$ to have decayed to $0$. Each run of the experiment was an average of $10$ runs. We repeated this 50 times and we report the mean and standard deviation of the average reward.

\begin{figure}[H]
\centering
\includegraphics[width=0.73\textwidth]{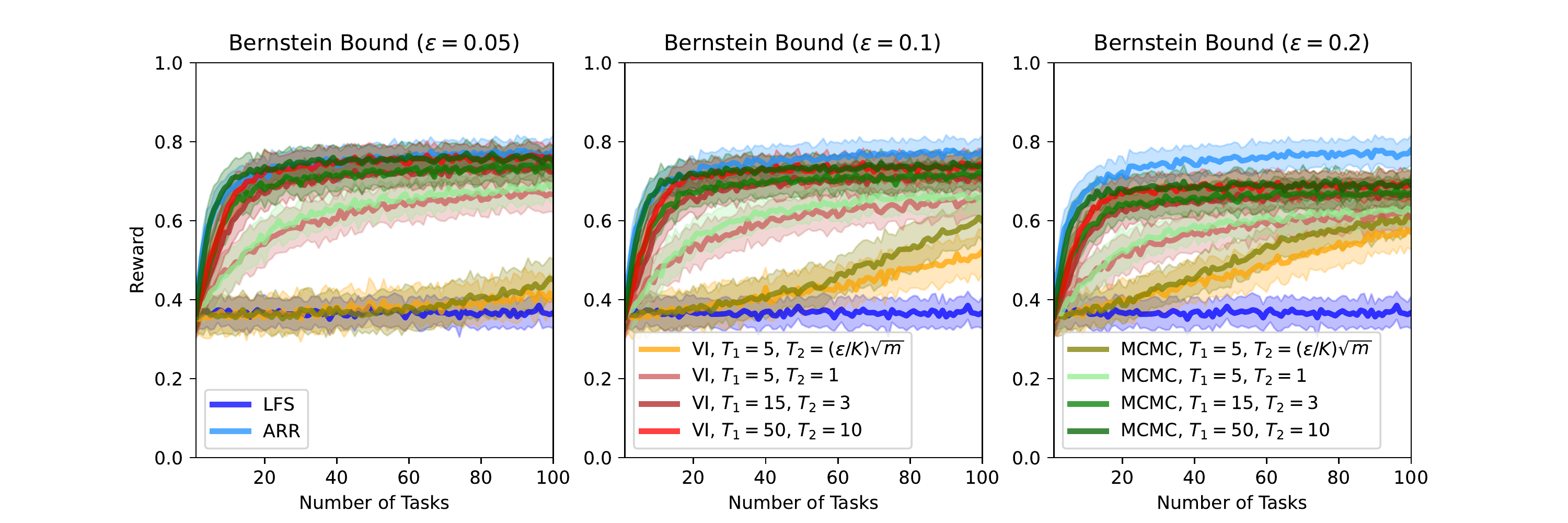}

\includegraphics[width=0.73\textwidth]{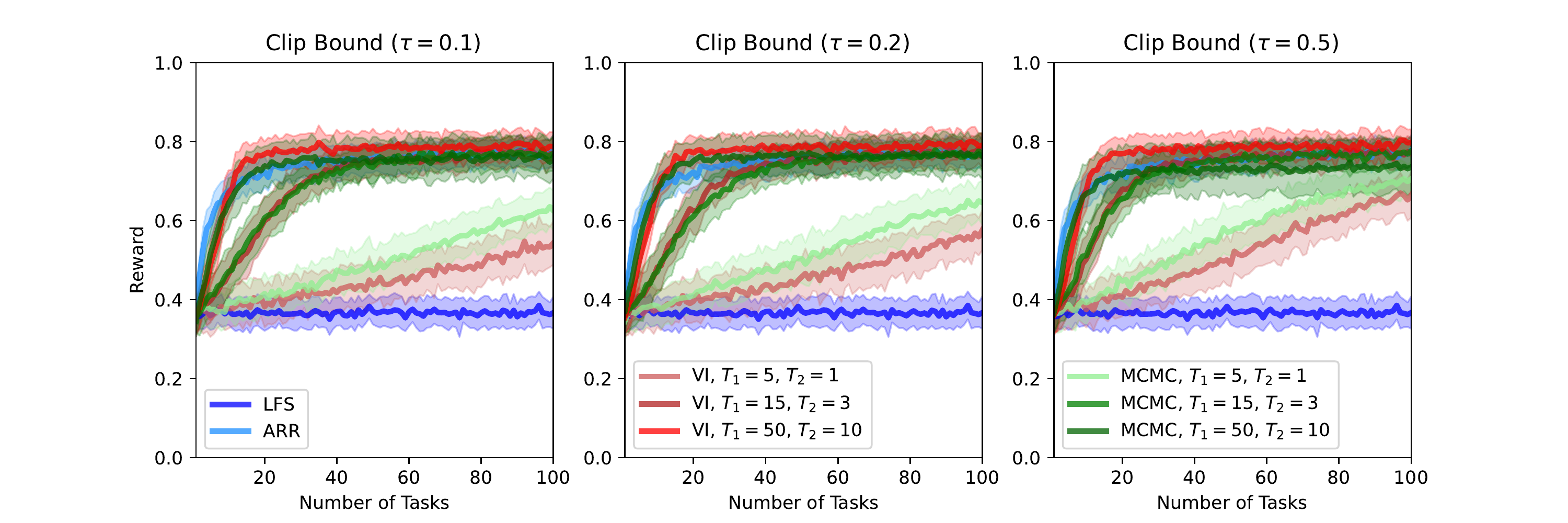}

\includegraphics[width=0.73\textwidth]{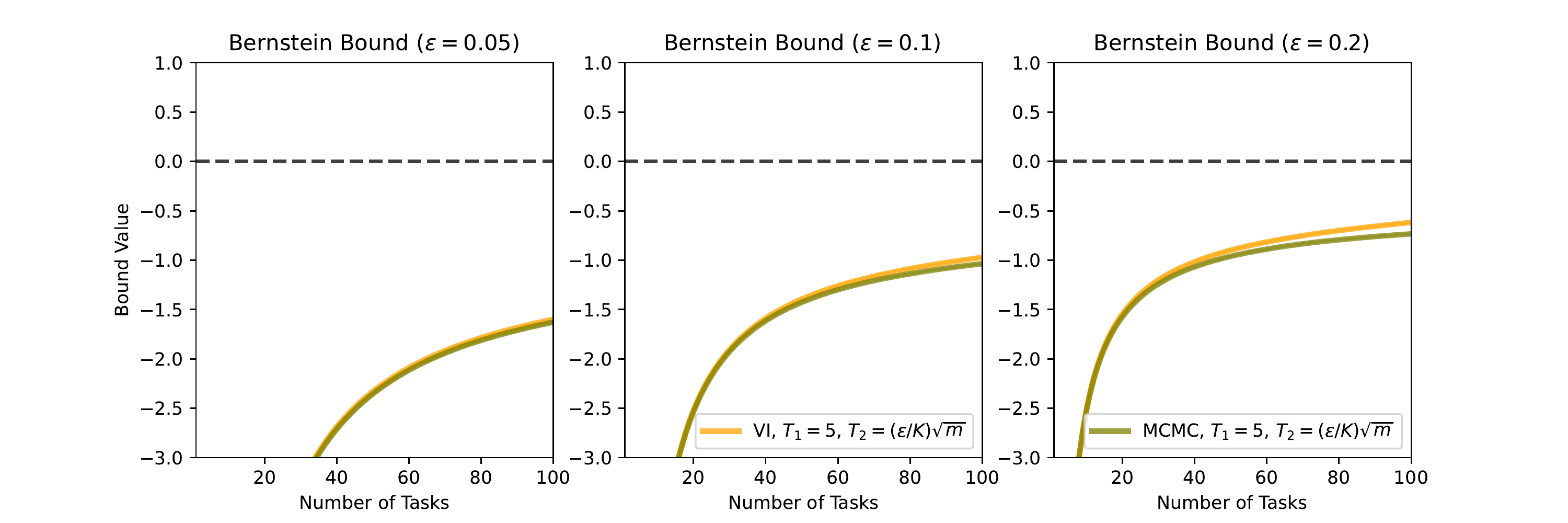}

\includegraphics[width=0.73\textwidth]{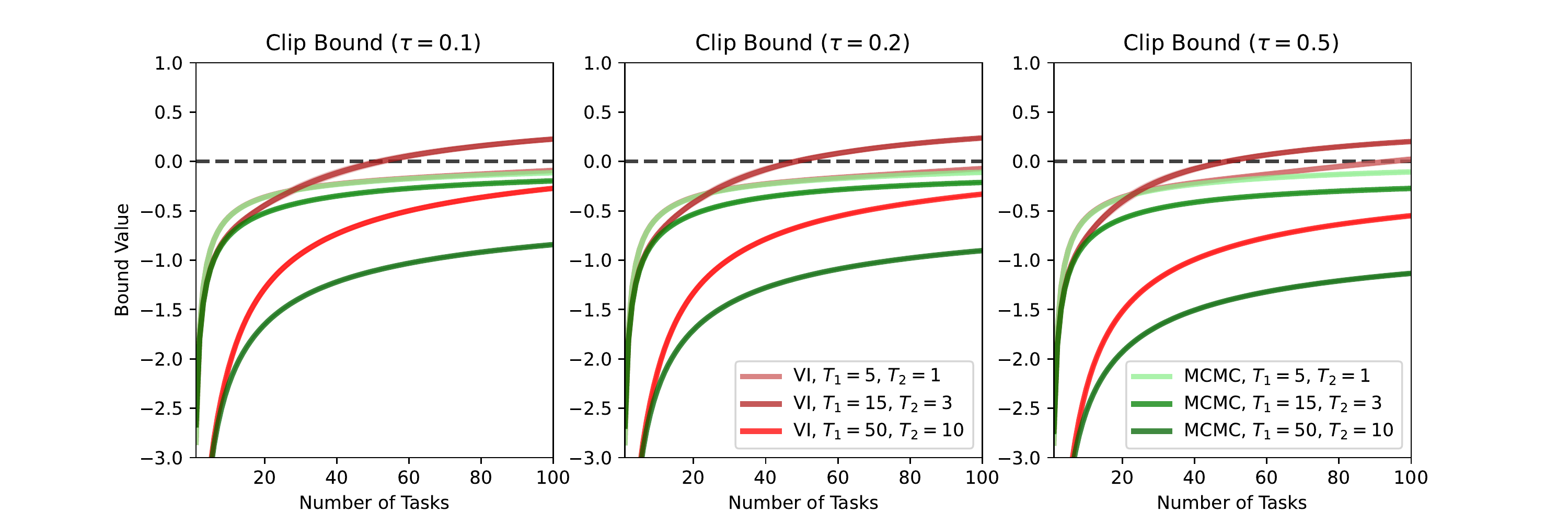}
\caption{The average reward obtained with the Bernstein bound (top row) and with the clipping bound (upper middle row) in tasks sampled from environment one. The values of the Bernstein bound (lower middle row) and the clipping bound (bottom row) are also shown. The solid lines show the mean average reward and the shaded regions show the mean $\pm$ 1 standard deviation.}
\label{fig:a10}
\end{figure}

First, we investigate the performance of our algorithms when the hyperprior is uninformative. Recall from the beginning of Sect.\ \ref{sec:alg}, that the hyperprior is a distribution over the weight vector of a softmax policy. For our experiments with an uninformative hyperprior, we use a standard Gaussian distribution as the hyperprior. Figure \ref{fig:a10} shows the average reward obtained with the baselines and with PAC-Bayes VI and PAC-Bayes MCMC in environment one. The values of the Bernstein and clipping bounds are also shown.

\begin{figure}[H]
\centering
\includegraphics[width=0.73\textwidth]{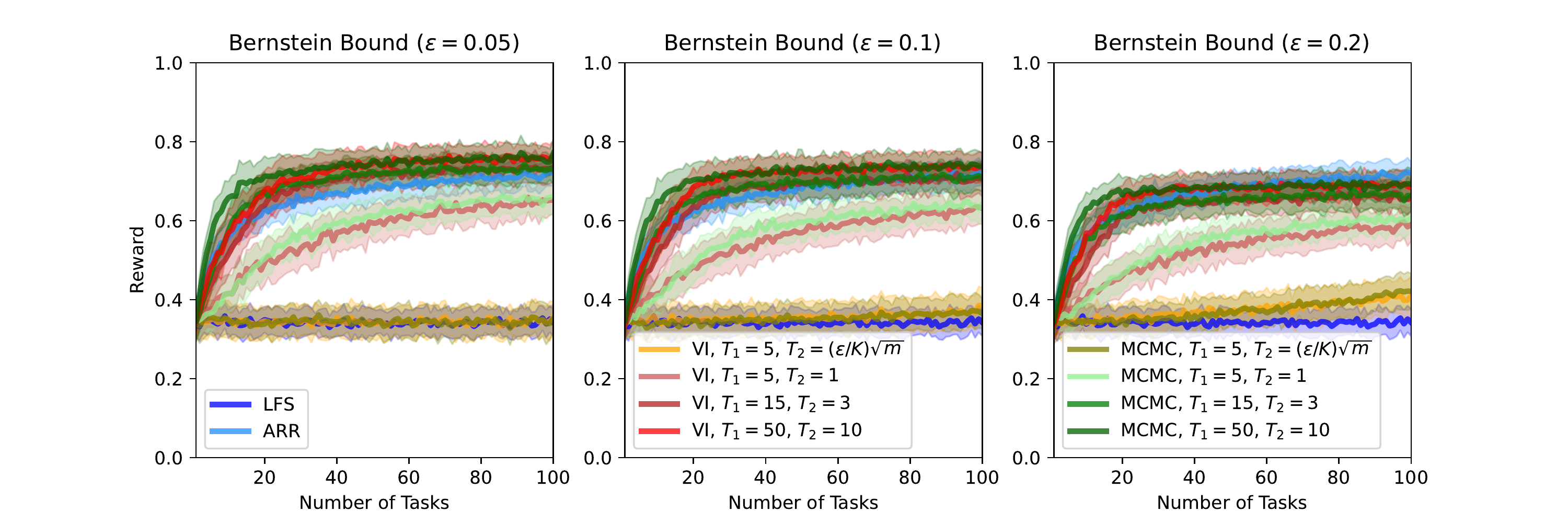}

\includegraphics[width=0.73\textwidth]{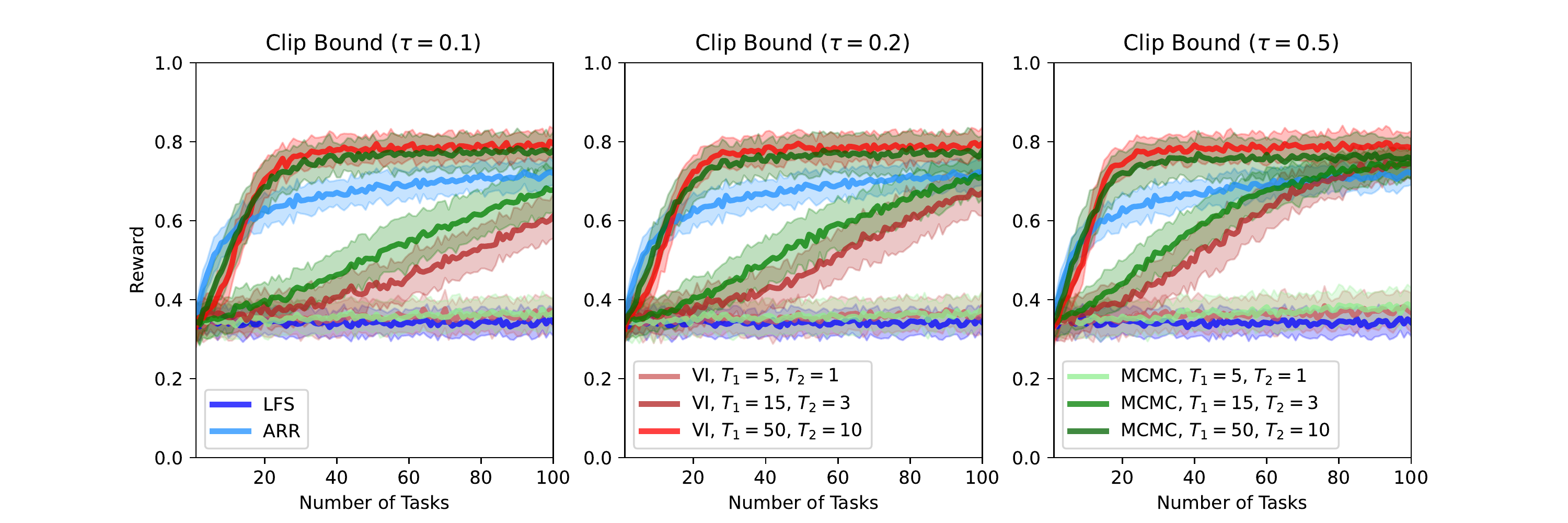}

\includegraphics[width=0.73\textwidth]{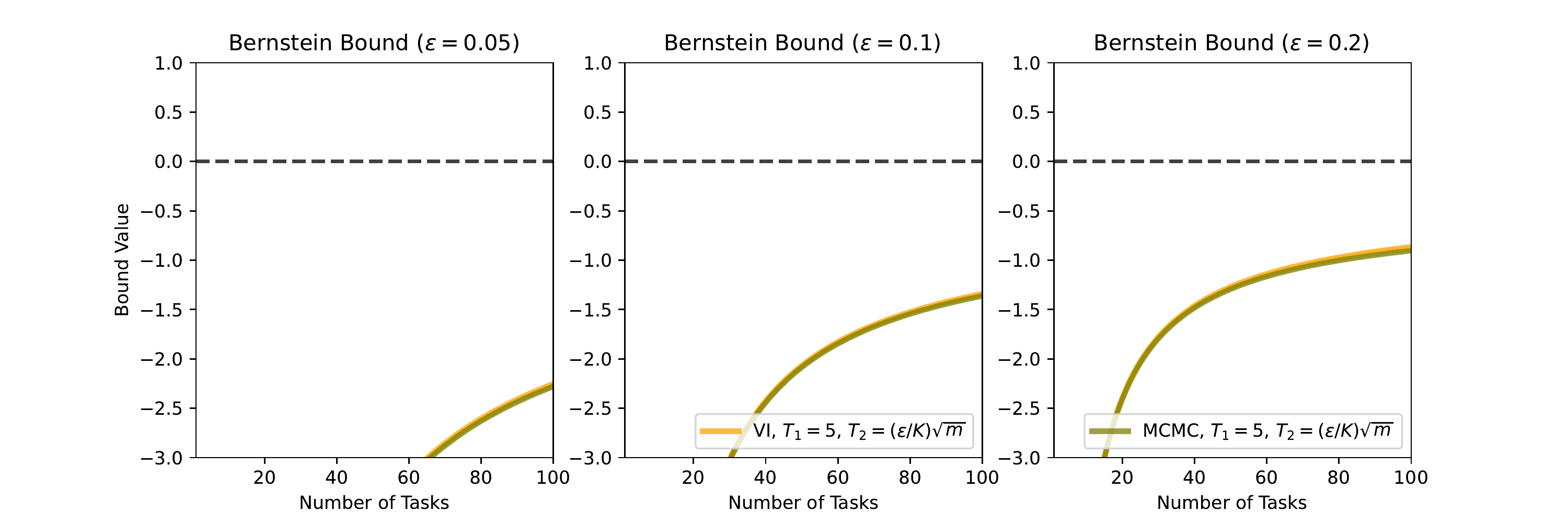}

\includegraphics[width=0.73\textwidth]{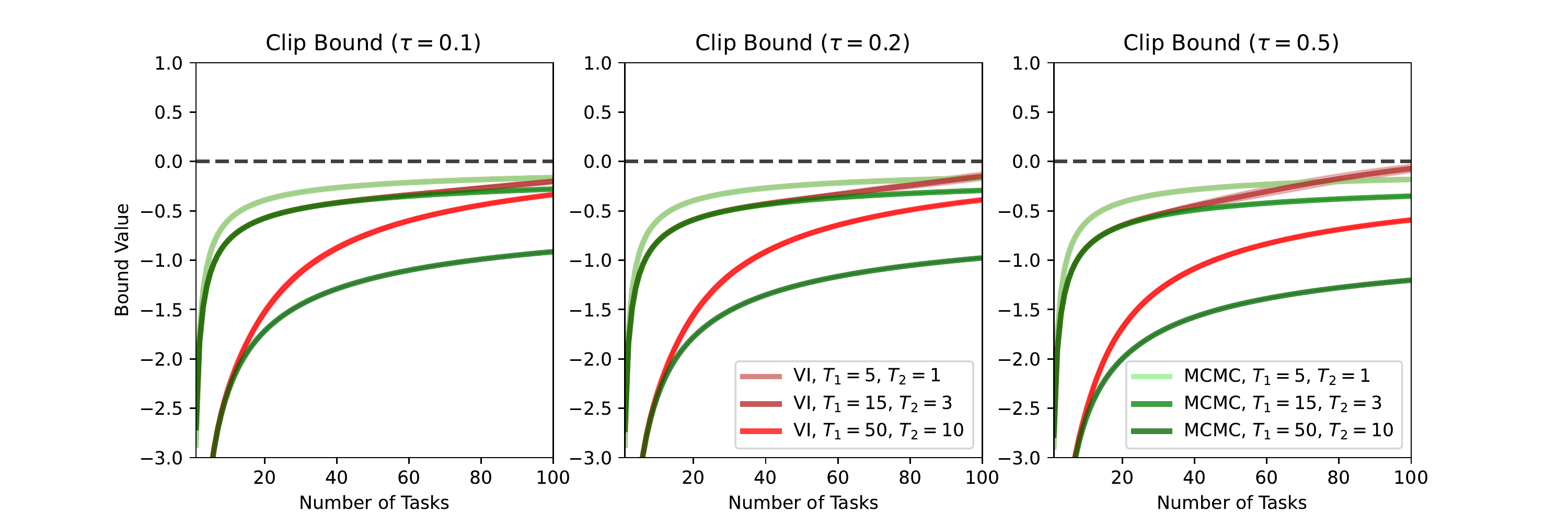}
\caption{The average reward obtained with the Bernstein bound (top row) and with the clipping bound (upper middle row) in tasks sampled from environment two. The values of the Bernstein bound (lower middle row) and the clipping bound (bottom row) are also shown. The solid lines show the mean average reward and the shaded regions show the mean $\pm$ 1 standard deviation.}
\label{fig:a20}
\end{figure}

In Figure \ref{fig:a10}, ARR reached an average reward of 0.779 by task 100. When the Bernstein bound was used and $\epsilon$ was 0.05, PAC-Bayes VI and PAC-Bayes MCMC with $T_1=50$ and $T_2=10$ reached only 0.753 and 0.749, respectively. When $\epsilon$ was 0.1 or 0.2, the average reward for both PAC-Bayes algorithms plateaued at a lower average reward. When the clipping bound was used, PAC-Bayes VI with the highest temperatures and $\tau$ equal to $0.1$, $0.2$ and $0.5$ reached average rewards of 0.788, 0.791 and 0.798 respectively, which is slightly higher than that of ARR. At the highest temperatures, PAC-Bayes MCMC was slightly worse, reaching only 0.762, 0.763 and 0.736 respectively for each value of $\tau$.

\begin{figure}[H]
\centering
\includegraphics[width=0.73\textwidth]{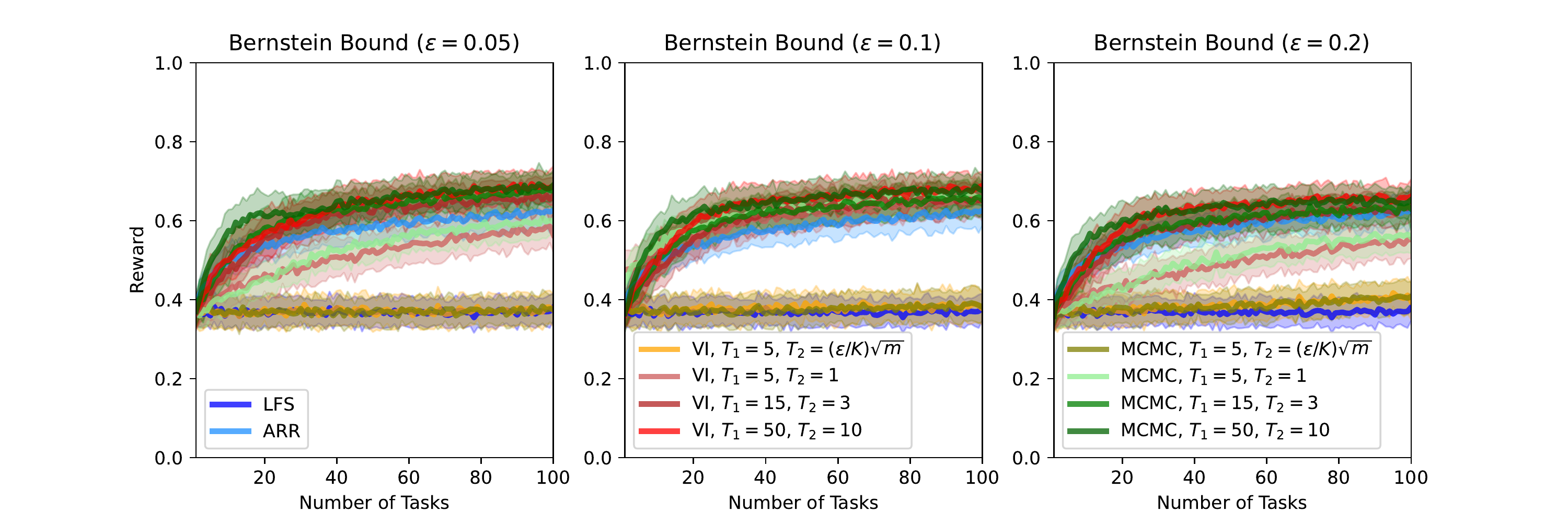}

\includegraphics[width=0.73\textwidth]{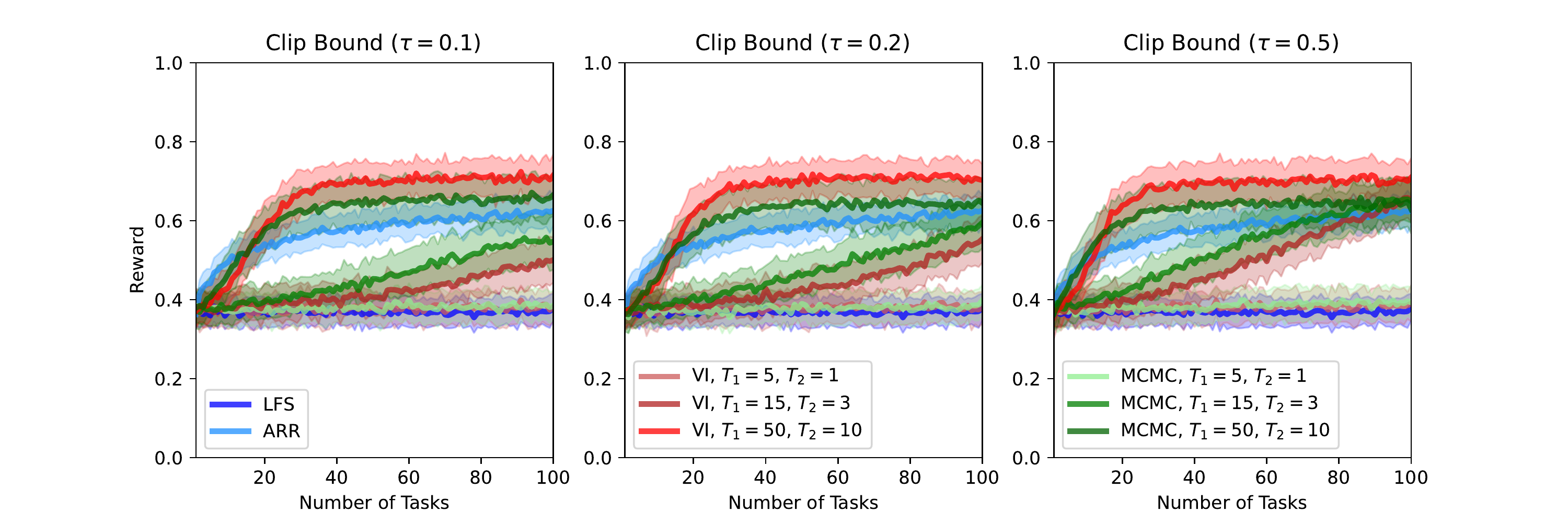}

\includegraphics[width=0.73\textwidth]{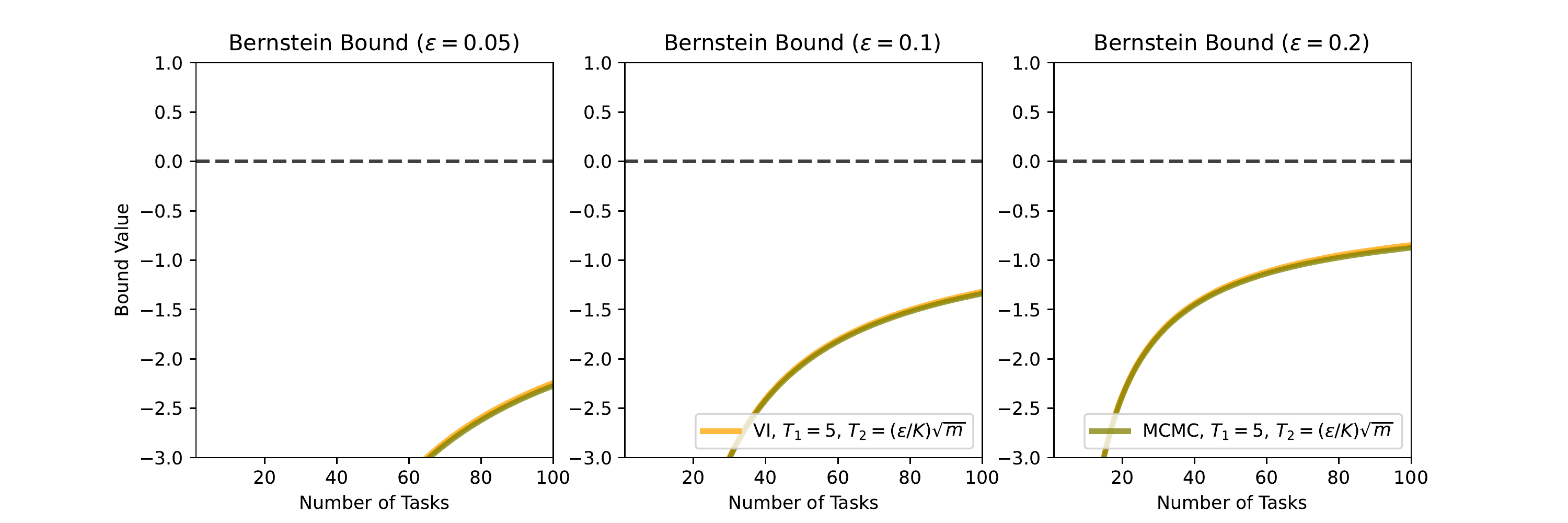}

\includegraphics[width=0.73\textwidth]{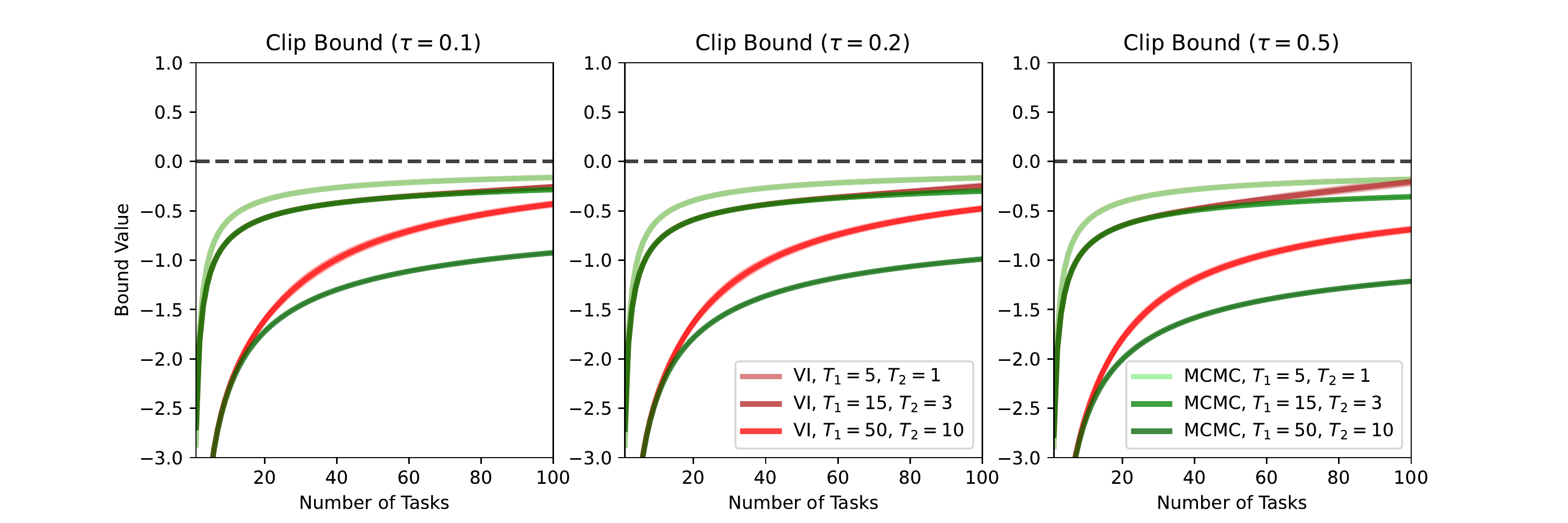}
\caption{The average reward obtained with the Bernstein bound (top row) and with the clipping bound (upper middle row) in tasks sampled from environment three. The values of the Bernstein bound (lower middle row) and the clipping bound (bottom row) are also shown. The solid lines show the mean average reward and the shaded regions show the mean $\pm$ 1 standard deviation.}
\label{fig:a21}
\end{figure}

In Figure \ref{fig:a10}, the Bernstein bound on the marginal transfer reward was below 0 for every value of $\epsilon$. Unsurprisingly, given the explicit dependence of the bound on $\epsilon$, increasing $\epsilon$ resulted in greater lower bound values. The clip bound was greater than 0 for PAC-Bayes VI with $T_1=15$ and $T_2=3$. For $\tau$ equal to 0.1, 0.2 and 0.5, the bound values at this temperature and after 100 tasks were 0.227, 0.238 and 0.202 respectively. Particularly at the highest temperatures, the clip bound value for the MCMC version was far below the clip bound value for the VI version. Figure \ref{fig:a20} shows the average reward and bound values obtained in environment two.

\begin{figure}[H]
\centering
\includegraphics[width=0.73\textwidth]{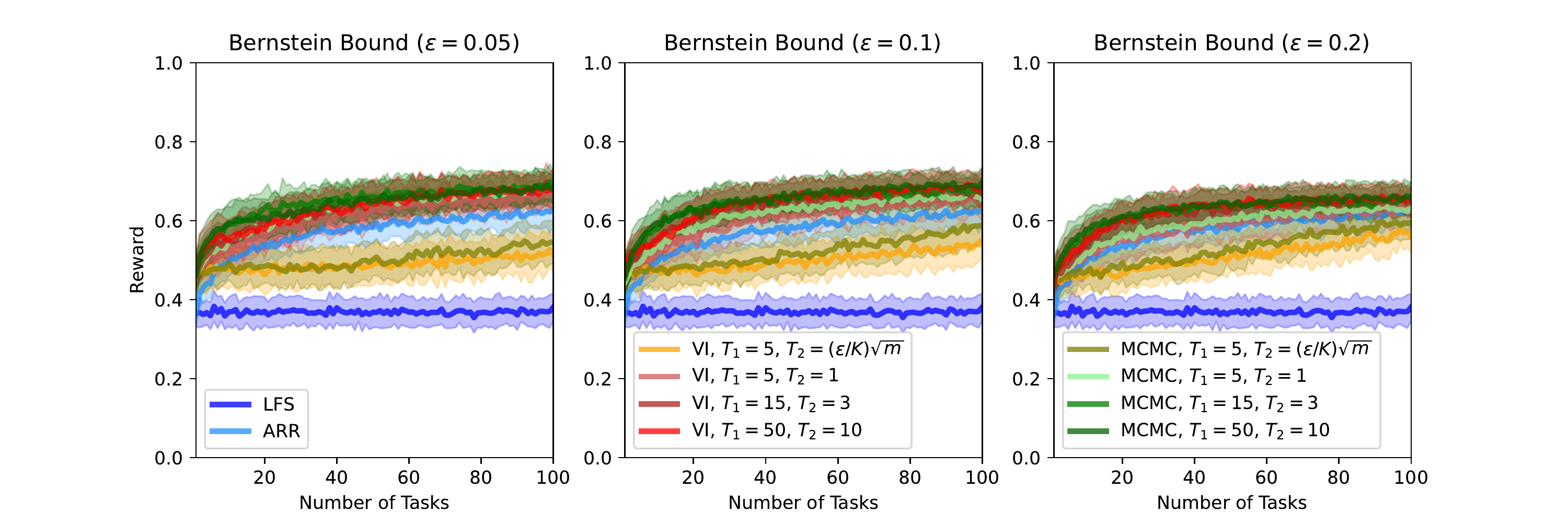}

\includegraphics[width=0.73\textwidth]{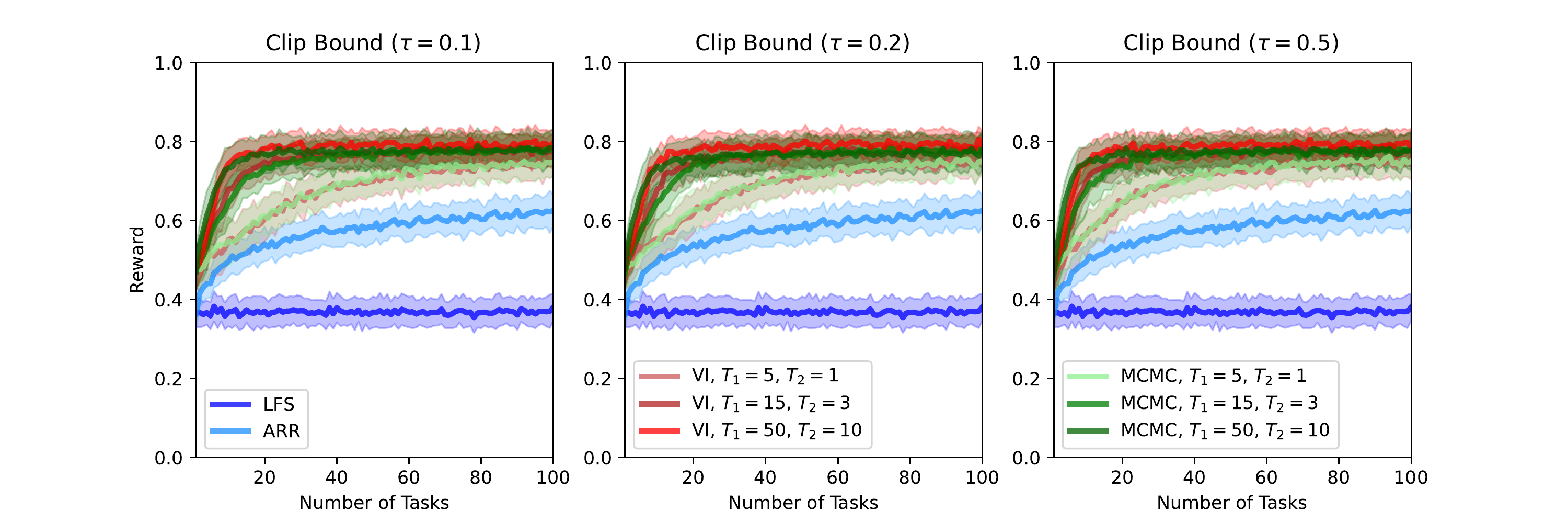}

\includegraphics[width=0.73\textwidth]{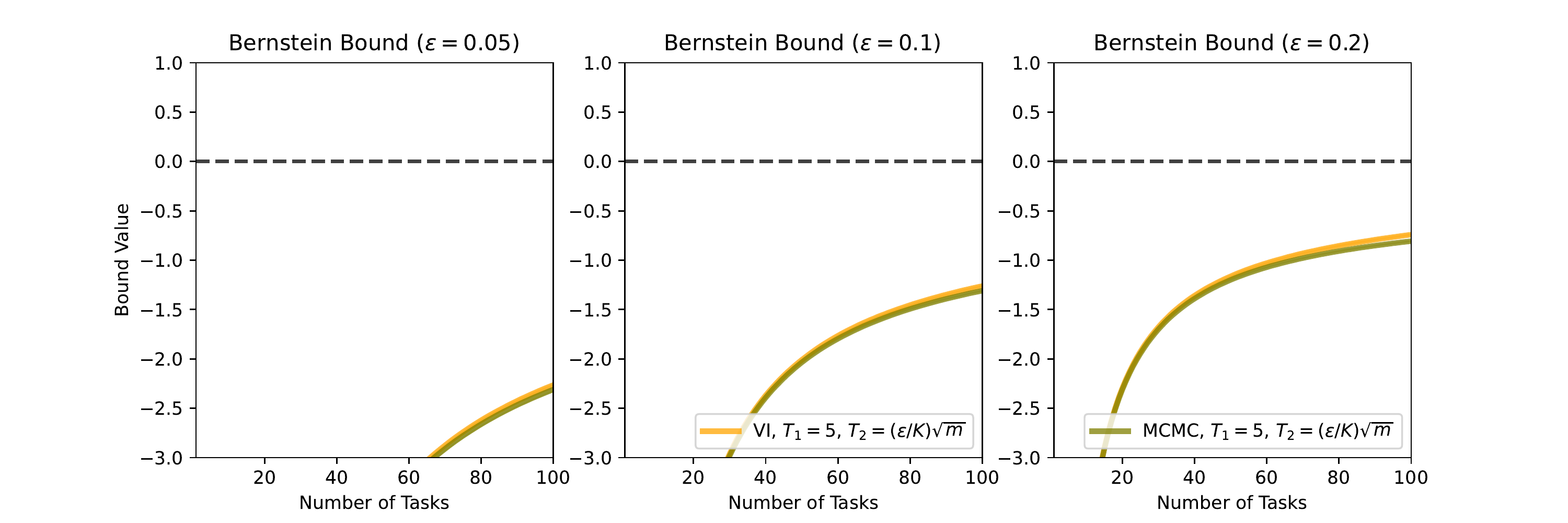}

\includegraphics[width=0.73\textwidth]{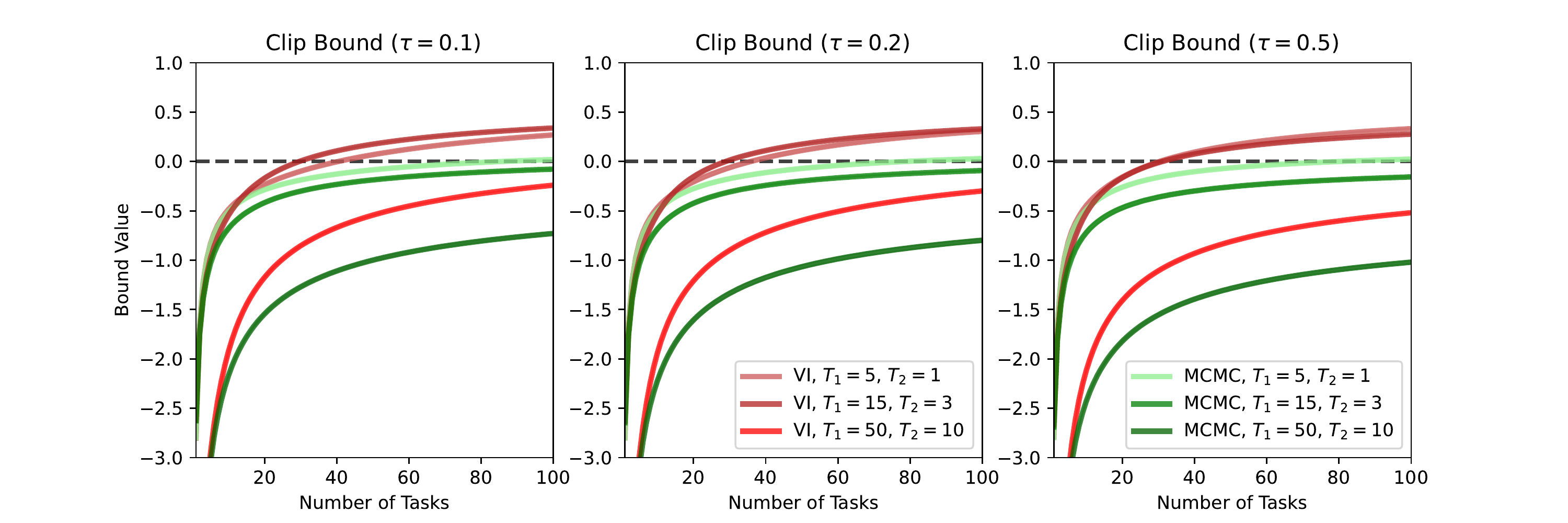}
\caption{The average reward obtained with the Bernstein bound (top row) and with the clipping bound (upper middle row) in tasks sampled from environment three, starting from a more informative hyperprior. The values of the Bernstein bound (lower middle row) and the clipping bound (bottom row) are also shown. The solid lines show the mean average reward and the shaded regions show the mean $\pm$ 1 standard deviation.}
\label{fig:a21_inf}
\end{figure}

Recall that in environment one (Figure \ref{fig:a10}), ARR reached higher average reward than our PAC-Bayes algorithms when the Bernstein bound was used. In environment two, this was no longer the case. In Figure \ref{fig:a10} ARR reached an average reward of 0.720. At the highest temperatures, and when $\epsilon$ was 0.05, PAC-Bayes VI and PAC-Bayes MCMC reached average rewards of 0.764 and 0.771 respectively. When the clip bound was used, PAC-Bayes VI with the highest temperatures and $\tau$ equal to $0.1$, $0.2$ and $0.5$ reached average rewards of 0.794, 0.796 and 0.784 respectively, which is almost the same as in environment one. At the highest temperature, PAC-Bayes MCMC was once again slightly worse, reaching only 0.778, 0.776 and 0.758 respectively for each value of $\tau$.

In Figure \ref{fig:a20}, the Bernstein bound was below 0 for every value of $\epsilon$. Unlike in environment one, the clip bound was below 0 for every temperature and every value of $\tau$. The lower bound values at the medium and lower temperatures were likely worse here than in environment one because the average rewards at these temperatures were lower in environment two. The highest bound value was -0.071, reached by PAC-Bayes VI after 100 tasks with $T_1=15$, $T_2=3$ and $\tau = 0.5$. Figure \ref{fig:a21} shows the average reward and bound values obtained in environment three.

In Figure \ref{fig:a21}, ARR reached an average reward of 0.626 by task 100. When the Bernstein bound was used and $\epsilon$ was 0.05, PAC-Bayes VI and PAC-Bayes MCMC with the highest temperatures reached 0.689 and 0.691 respectively. As in the previous two environments, when the clip bound was used, PAC-Bayes VI reached higher average reward than PAC-Bayes MCMC at the highest temperatures. With the highest temperatures and $\tau$ equal to $0.1$, $0.2$ and $0.5$, PAC-Bayes VI reached average rewards of 0.715, 0.703 and 0.710 respectively. PAC-Bayes MCMC reached average rewards of 0.666, 0.645 and 0.629 respectively. In the bottom two rows of Figure \ref{fig:a21}, the Bernstein lower bound was once again below 0 for every value of $\epsilon$. The clip lower bound was below 0 for every temperature and every value of $\tau$.

Lastly, we tested our algorithms once more in environment three, but this time with a more informative hyperprior. The hyperprior is still a diagonal Gaussian distribution over the weight vector of a softmax policy, but this time the last element of the mean vector is 2 instead of 0. Therefore, priors sampled from this informative hyperprior are likely to assign a higher probability to the last action than the other actions. Since the last action is usually the best action, for tasks sampled from environment three, we expect this hyperprior to improve the average reward and bound values obtained. Our results with this informative hyperprior are shown in Figure \ref{fig:a21_inf}.

In Figure \ref{fig:a21_inf}, when the Bernstein bound was used and $\epsilon$ was 0.05, PAC-Bayes VI and PAC-Bayes MCMC with the highest temperatures reached an average rewards of 0.681 and 0.678 respectively. These average rewards with the informative hyperprior are almost no different from the corresponding average rewards in Figure \ref{fig:a21}, where the uninformative hyperprior was used. However, when the clip bound was used, both PAC-Bayes VI and PAC-Bayes MCMC reached higher average rewards with the informative hyperprior. With the highest temperatures and $\tau$ equal to $0.1$, $0.2$ and $0.5$, PAC-Bayes VI reached average rewards of 0.797, 0.792 and 0.787 respectively. PAC-Bayes MCMC reached average rewards of 0.785, 0.778 and 0.777 respectively.

In the bottom two rows of Figure \ref{fig:a21_inf}, even when using the informative hyperprior, the Bernstein bound was below 0 for every value of $\epsilon$. However, the clip bound was above 0 when PAC-Bayes VI was run with the medium ($T_1=15, T_2=3$) or low ($T_1=5, T_2=1$) temperatures. At the medium temperatures, the bound values after 100 tasks when $\tau$ was $0.1$, $0.2$ and $0.5$ were $0.339$, $0.331$ and $0.277$.

Our results suggest that, when run with high enough values of the temperature parameters, our PAC-Bayes algorithms can obtain higher average reward than ARR. PAC-Bayes VI with the clip bound consistently reached the highest average reward when run with the highest temperatures and any value of $\tau$. PAC-Bayes VI with the clip bound also produced the best bound values. In environment one and environment three with the informative hyperprior, it was possible to obtain close to the best average reward and a non-trivial lower bound value simultaneously by running PAC-Bayes VI with the clip bound and with $T_1=15, T_2=3$. In general, higher temperatures yielded higher average reward whereas the best bound values were achieved with medium or low temperatures. This suggests that using the bound value to select the temperature parameters, as discussed in Sect.\ \ref{sec:alg}, may result in sub-optimal average reward.

\section{Related Work}

Lifelong learning is related to problems in which the goal is to learn from a set of tasks. In multi-task learning \cite{caruana1997multitask}, the goal is to learn multiple tasks simultaneously. In domain adaptation \cite{ben2010theory}, the goal is to learn one task, using data from other tasks. In meta learning \cite{schmidhuber1987evolutionary}, the goal is to learn a learning algorithm from a set of tasks that performs well on new tasks. Approaches that use PAC-Bayesian analysis for these problems are closely related to this work. For example, PAC-Bayesian methods for domain adaptation \cite{germain2013domain} \cite{germain2016new} \cite{germain2020domain} and meta learning \cite{amit2018meta} \cite{rothfuss2021pacoh}.

The first PAC-Bayesian lifelong learning bound \cite{pentina2014pac} has since been extended to certain situations where tasks are not sampled i.i.d. from a task environment \cite{pentina2015lifelong}. This extension considers the case where the task data distributions are sampled from the same distribution, but not independently, and also the case where tasks are sampled independently, but the task environment changes over time.

PAC-Bayesian bounds have previously been derived for single multi-armed bandit problems \cite{seldin2011pacmab} \cite{seldin2012bern} and contextual bandit problems \cite{seldin2011paccb}. These results made heavy use of PAC-Bayesian martingale concentration inequalities \cite{seldin2012mart}. Subsequent PAC-Bayesian inequalities for martingales \cite{wang2015pac} \cite{balsubramani2015pac} may therefore also prove useful for studying the multi-armed bandit and contextual bandit problems.

Several works have applied PAC-Bayesian analysis to offline bandit problems. In offline bandit problems, the training data are all sampled from a single, fixed behaviour policy. This means that action-reward pairs are independent of each other. PAC-Bayesian bounds for the clipped importance-weighted reward estimate \cite{london2019bayesian} and for a self-normalised weighted importance sampling estimate \cite{kuzborskij2019efron} have been derived in the offline setting.

PAC-Bayesian bounds for model selection \cite{fard2010pac} and policy evaluation \cite{fard2011pac} have been derived in the reinforcement learning setting. Finally, PAC-Bayesian bounds have been derived for out-of-distribution generalisation \cite{majumdar2018pac}, \cite{veer2020probably} \cite{ren2020generalization} and out-of-distribution detection \cite{farid2021task} in reinforcement learning problems.

\section{Conclusion}\label{sect:conclusion}

In this paper, we derived the first PAC-Bayesian generalisation bounds for lifelong learning of MAB tasks. We proposed lifelong learning algorithms, PAC-Bayes VI and PAC-Bayes MCMC, that use our bounds as their learning objectives. In our experiments, we found that when run with high enough values of their temperature parameters, our algorithms performed better than ARR. When run with lower values of the temperature parameters, our clipping bound gave non-trivial lower bounds on the marginal transfer reward.

We will conclude by discussing some limitations of our bounds and some ideas for future work. Our bounds use the importance-weighted empirical reward estimate, which is known to have high variance. This means it is difficult to derive tight bounds. This problem is somewhat addressed by clipping the importance weights. However, other reward estimates with reduced variance, such as weighted importance sampling \cite{rubinstein1981simulation} and the doubly robust estimator \cite{dudik2014doubly}, may allow for tighter bounds. We therefore see exploring alternative reward estimates as a direction for future work.

Our bounds contain a term, $c_n$, that cannot easily be computed. Since this term is constant with respect to the hyperposterior $\mathcal{Q}$, it does not effect our algorithms that learn a hyperposterior by maximising our bounds. However, this is a problem if we want to evaluate our lower bounds. Therefore, exploring alternative proof techniques in order to replace the unknown $c_n$ term is another direction for future work.

\bibliography{biblio}
\bibliographystyle{abbrv}

\appendix

\section{Additional Proofs}

\subsection{Proof of Lemma \ref{lem:azuma}}

\begin{proof}[Proof of Lemma \ref{lem:azuma}]

We have that:

\begin{align*}
\mathop{\mathbb{E}}_{X_1, \dots, X_n}\left[e^{\lambda\sum_{i=1}^{n}X_i}\right] &= \mathop{\mathbb{E}}_{X_1, \dots, X_n}\left[\prod_{i=1}^{n}e^{\lambda X_i}\right]\\
&= \mathop{\mathbb{E}}_{X_1, \dots, X_{n-1}}\left[\mathop{\mathbb{E}}_{X_n}\condexp*{\prod_{i=1}^{n}e^{\lambda X_i}}{X_1, \dots, X_{n-1}}\right]\\
&\leq \mathop{\mathbb{E}}_{X_1, \dots, X_{n-1}}\left[e^{\lambda\mathop{\mathbb{E}}_{X_n}[X_n|X_1, \dots, X_{n-1}]}e^{\frac{\lambda^2}{8}(b_n - a_n)^2}\prod_{i=1}^{n-1}e^{\lambda X_i}\right]\\
&\leq e^{\frac{\lambda^2}{8}(b_n - a_n)^2}\mathop{\mathbb{E}}_{X_1, \dots, X_{n-1}}\left[e^{\lambda\sum_{i=1}^{n-1}X_i}\right]
\end{align*}

By iterating the above steps, we have that:

\begin{equation*}
\mathop{\mathbb{E}}_{X_1, \dots, X_n}\left[e^{\lambda\sum_{i=1}^{n}X_i}\right] \leq \prod_{i=1}^{n}e^{\frac{\lambda^2}{8}(b_i - a_i)^2} = e^{\frac{\lambda^2}{8}\sum_{i=1}^{n}(b_i - a_i)^2}
\end{equation*}
\end{proof}

\subsection{Proof of Lemma \ref{lem:multi_risk_trunc}}

\begin{proof}[Proof of Lemma \ref{lem:multi_risk_trunc}]

Let $M_i^j(a) = \sum_{k=1}^{j}X_{ik}^{\tau}(a)$, where $X_{ik}^{\tau}(a)$ is as defined in Equation (\ref{eqn:supermax}). Using the compression lemma, we have that for any $k = 1, \dots, m$

\begin{align}
\mathop{\mathbb{E}}_{P \sim \mathcal{Q}, a_i \sim A(D_i^{:k-1},P)}\left[\sum_{i=1}^{n}M_i^m(a_i)\right] &\leq \frac{1}{\lambda}D_{\mathrm{KL}}((\mathcal{Q}, A_{k-1}^n)||(\mathcal{P}, P^n))\nonumber\\
&+\frac{1}{\lambda}\mathrm{ln}\left(\mathop{\mathbb{E}}_{P \sim \mathcal{P}, a_i \sim P}\left[e^{\sum_{i=1}^{n}M_i^m(a_i)}\right]\right)\label{eqn:lem2:compress}.
\end{align}

Now, we need to upper bound the term inside the logarithm. For any $\delta \in (0, 1]$, using Markov's inequality with respect to expectations over $D_1, \dots, D_n$, the following inequality holds with probability greater than $1 - \delta$

\begin{equation*}
\mathop{\mathbb{E}}_{P \sim \mathcal{P}, a_i \sim P}\left[\prod_{i=1}^{n}e^{\lambda M_i^m(a_i)}\right] \leq \frac{1}{\delta}\mathop{\mathbb{E}}_{D_1, \dots, D_n}\mathop{\mathbb{E}}_{P \sim \mathcal{P}, a_i \sim P}\left[\prod_{i=1}^{n}e^{\lambda M_i^m(a_i)}\right].
\end{equation*}

If $\mathcal{P}$ does not depend on any of the training sets $D_1, \dots, D_n$, then the order of the expectations can be swapped.

\begin{equation*}
\frac{1}{\delta}\mathop{\mathbb{E}}_{D_1, \dots, D_n}\mathop{\mathbb{E}}_{P \sim \mathcal{P}, a_i \sim P}\left[\prod_{i=1}^{n}e^{\lambda M_i^m(a_i)}\right] = \frac{1}{\delta}\mathop{\mathbb{E}}_{P \sim \mathcal{P}, a_i \sim P}\mathop{\mathbb{E}}_{D_1, \dots, D_n}\left[\prod_{i=1}^{n}e^{\lambda M_i^m(a_i)}\right].
\end{equation*}

Since each training set is only dependent on the training sets that came before it, the expectation over $D_1, \dots, D_n$ can be factorised.

\begin{align*}
\frac{1}{\delta}&\mathop{\mathbb{E}}_{P \sim \mathcal{P}, a_i \sim P}\mathop{\mathbb{E}}_{D_1, \dots, D_n}\left[\prod_{i=1}^{n}e^{\lambda M_i^m(a_i)}\right]\\
&= \frac{1}{\delta}\mathop{\mathbb{E}}_{P \sim \mathcal{P}, a_i \sim P}\mathop{\mathbb{E}}_{D_1, \dots, D_{n-1}}\left[\mathop{\mathbb{E}}_{D_n}\condexp*{\prod_{i=1}^{n}e^{\lambda M_i^m(a_i)}}{D_1, \dots, D_{n-1}}\right].
\end{align*}

Using Lemma \ref{lem:azuma}, the $n$th term of the product can be upper bounded.

\begin{align*}
\frac{1}{\delta}\mathop{\mathbb{E}}_{P \sim \mathcal{P}, a_i \sim P}&\mathop{\mathbb{E}}_{D_1, \dots, D_{n-1}}\left[\mathop{\mathbb{E}}_{D_n}\condexp*{\prod_{i=1}^{n}e^{\lambda M_i^m(a_i)}}{D_1, \dots, D_{n-1}}\right]\\
&\leq \frac{1}{\delta}\mathop{\mathbb{E}}_{P \sim \mathcal{P}, a_i \sim P}\mathop{\mathbb{E}}_{D_1, \dots, D_{n-1}}\left[e^{\frac{m\lambda^2(1+\tau)^2}{8}}\prod_{i=1}^{n-1}e^{\lambda M_i^m(a_i)}\right]
\end{align*}

Through alternating steps of factorisation and application of Lemma \ref{lem:azuma}, we have that

\begin{equation*}
\frac{1}{\delta}\mathop{\mathbb{E}}_{P \sim \mathcal{P}, a_i \sim P}\mathop{\mathbb{E}}_{D_1, \dots, D_n}\left[\prod_{i=1}^{n}e^{\lambda M_i^m(a_i)}\right] \leq \frac{1}{\delta}e^{\frac{nm\lambda^2(1+\tau)^2}{8}}.
\end{equation*}

Substituting this into Equation (\ref{eqn:lem2:compress}), we have that for any $\delta \in (0, 1]$, any $\lambda > 0$, any $\tau > 0$ and any $k \in \{1, \dots, m\}$, the following inequality holds with probability at least $1 - \delta$.

\begin{align}
\label{eqn:pre_union}
\mathop{\mathbb{E}}_{P \sim \mathcal{Q}, a_i \sim A(D_i^{:k-1},P)}\left[\sum_{i=1}^{n}M_i^m(a_i)\right] &\leq \frac{1}{\lambda}D_{\mathrm{KL}}((\mathcal{Q}, A_{k-1}^n)||(\mathcal{P}, P^n))\\
&+ \frac{nm\lambda(1+\tau)^2}{8} + \frac{1}{\lambda}\mathrm{ln}(1/\delta)\nonumber
\end{align}

Using the union bound, if we replace $\mathrm{ln}(1/\delta)$ with $\mathrm{ln}(m/\delta)$, then Equation \ref{eqn:pre_union} holds simultaneously for all $k = 1, \dots, m$ with probability at least $1 - \delta$. From the definitions of $\widetilde{\mathcal{R}}(\mathcal{Q})$, $\widehat{\mathcal{R}}_{\tau}(\mathcal{Q})$ and $M_i^j(a)$:

\begin{align*}
\widehat{\mathcal{R}}_{\tau}(\mathcal{Q}) - \widetilde{\mathcal{R}}(\mathcal{Q}) &= \frac{1}{m}\sum_{k=1}^{m}\frac{1}{nm}\mathop{\mathbb{E}}_{P \sim \mathcal{Q}, a_i \sim A(D_i^{:k-1},P)}\left[\sum_{i=1}^{n}M_i^m(a_i)\right]
\end{align*}

Substituting in the result of Equation \ref{eqn:pre_union}, we have that with probability at least $1 - \delta$:

\begin{align*}
\widehat{\mathcal{R}}_{\tau}(\mathcal{Q}) - \widetilde{\mathcal{R}}(\mathcal{Q}) &\leq \frac{1}{m}\sum_{k=1}^{m}\frac{1}{nm}\frac{1}{\lambda}D_{\mathrm{KL}}((\mathcal{Q}, A_{k-1}^n)||(\mathcal{P}, P^n))\\
&+ \frac{\lambda(1+\tau)^2}{8} + \frac{1}{nm\lambda}\mathrm{ln}(m/\delta)
\end{align*}

By using Equation \ref{eqn:kl} and rearranging this inequality, we obtain:

\begin{align*}
\widetilde{\mathcal{R}}(\mathcal{Q}) &\geq \widehat{\mathcal{R}}_{\tau}(\mathcal{Q}) - \frac{1}{nm\lambda}D_{\mathrm{KL}}(\mathcal{Q}||\mathcal{P}) - \frac{1}{nm^2\lambda}\mathop{\mathbb{E}}_{P \sim \mathcal{Q}}\left[\sum_{i=1}^{n}\sum_{j=1}^{m}D_{\mathrm{KL}}(A(D_i^{:j-1}, P)||P)\right]\\
&- \frac{\lambda(1+\tau)^2}{8} - \frac{1}{nm\lambda}\mathrm{ln}(m/\delta)
\end{align*}

Finally, the substitution $\lambda^{\prime} = \lambda/m$ yields the statement of the lemma.

\end{proof}

\subsection{Proof of Theorem \ref{thm:big_bound_trunc}}

\begin{proof}[Proof of Theorem \ref{thm:big_bound_trunc}]

By Lemma \ref{lem:transfer_bound}, for any hyperprior $\mathcal{P}$, $\lambda_1 > 0$ and any $\delta_1 \in (0, 1]$, the following inequality holds with probability at least $1 - \delta_1$

\begin{equation*}
\mathcal{R}(\mathcal{Q}) \geq \widetilde{\mathcal{R}}(\mathcal{Q}) - \frac{1}{\lambda_1}D_{\mathrm{KL}}(\mathcal{Q}||\mathcal{P}) - c_n - \frac{\lambda_1}{8n} - \frac{1}{\lambda_1}\mathrm{ln}(1/\delta_1).
\end{equation*}

By Lemma \ref{lem:multi_risk_trunc}, for any hyperprior $\mathcal{P}$, any $\lambda_2 > 0$, any $\tau > 0$ and any $\delta_2 \in (0, 1]$, the following inequality holds with probability at least $1 - \delta_2$

\begin{align*}
\widetilde{\mathcal{R}}(\mathcal{Q}) &\geq \widehat{\mathcal{R}}_{\tau}(\mathcal{Q}) - \frac{1}{n\lambda_2}D_{\mathrm{KL}}(\mathcal{Q}||\mathcal{P}) - \frac{1}{nm\lambda_2}\mathop{\mathbb{E}}_{P \sim \mathcal{Q}}\left[\sum_{i=1}^{n}\sum_{j=1}^{m}D_{\mathrm{KL}}(A(D_i^{:j}, P)||P)\right]\\
&- \frac{\lambda_2(1+\tau)^2}{8m} - \frac{1}{n\lambda_2}\mathrm{ln}(m/\delta_2)
\end{align*}

By the union bound, the probability that both inequalities hold simultaneously is at least $1 - \delta_1 - \delta_2$. Therefore, if we set $\delta_1 = \delta_2 = \delta/2$, we have that with probability at least $1 - \delta$

\begin{align*}
\mathcal{R}(\mathcal{Q}) &\geq \widehat{\mathcal{R}}_{\tau}(\mathcal{Q}) - \left(\frac{1}{\lambda_1} + \frac{1}{n\lambda_2}\right)D_{\mathrm{KL}}(\mathcal{Q}||\mathcal{P})\\
&- \frac{1}{nm\lambda_2}\mathop{\mathbb{E}}_{P \sim \mathcal{Q}}\left[\sum_{i=1}^{n}\sum_{j=1}^{m}D_{\mathrm{KL}}(A(D_{i}^{:j-1}, P)||P)\right]\\
&- c_n - \frac{\lambda_1}{8n} - \frac{\lambda_2(1+\tau)^2}{8m} - \frac{1}{\lambda_1}\mathrm{ln}(2/\delta) - \frac{1}{n\lambda_2}\mathrm{ln}(2m/\delta).
\end{align*}

\end{proof}

\end{document}